\definecolor{High}{rgb}{0.98, 0.85, 0.87}
\definecolor{Light}{rgb}{0.85, 0.95, 1.0}
\definecolor{codegreen}{rgb}{0,0.6,0}
\definecolor{codegray}{rgb}{0.5,0.5,0.5}
\definecolor{codepurple}{rgb}{0.58,0,0.82}
\definecolor{backcolour}{rgb}{0.95,0.95,0.92}
\lstdefinestyle{mystyle}{
    backgroundcolor=\color{backcolour},   
    commentstyle=\color{codegreen},
    keywordstyle=\color{magenta},
    numberstyle=\tiny\color{codegray},
    stringstyle=\color{codepurple},
    basicstyle=\ttfamily\footnotesize,
    breakatwhitespace=false,         
    breaklines=true,                 
    captionpos=b,                    
    keepspaces=true,                 
    numbers=left,                    
    numbersep=5pt,                  
    showspaces=false,                
    showstringspaces=false,
    showtabs=false,                  
    tabsize=2
}
\crefname{section}{Sec.}{Secs.}
\Crefname{section}{Section}{Sections}
\Crefname{table}{Table}{Tables}
\crefname{table}{Tab.}{Tabs.}
\newtheorem{definition}{Definition}
\newtheorem{thm}{Theorem}
\newtheorem{lemma}{Lemma}
\newtheorem{ass}{Assumption}
\def \E {\mathrm{E}}
\def \D {\mathcal{D}}
\def \x {\mathbf{x}}
\def \X {\mathcal{X}}
\def \Y {\mathcal{Y}}
\def \F {\mathcal{F}}
\def \R {\mathbb{R}}
\begin{document}

\title{Self-Supervised Pre-Training for Transformer-Based Person Re-Identification }
\newcommand*{\affaddr}[1]{#1}
\newcommand*{\affmark}[1][*]{\textsuperscript{#1}}
\author{Hao Luo, Pichao Wang, Yi Xu, Feng Ding, Yanxin Zhou, Fan Wang,  Hao Li, Rong Jin\\
\affaddr{\affmark[]Alibaba Group} \\
{\tt\small michuan.lh@alibaba-inc.com}
}
\maketitle

\begin{abstract}
Transformer-based supervised pre-training achieves great performance in person re-identification (ReID). However, due to the domain gap between ImageNet and ReID datasets, it usually needs a larger pre-training dataset (e.g. ImageNet-21K) to boost the performance because of the strong data fitting ability of the transformer. To address this challenge, this work targets to mitigate the gap between the pre-training and ReID datasets from the perspective of data and model structure, respectively. We first investigate self-supervised learning (SSL) methods with Vision Transformer (ViT) pretrained on unlabelled person images (the LUPerson dataset), and empirically find it significantly surpasses ImageNet supervised pre-training models on ReID tasks. To further reduce the domain gap and accelerate the pre-training, the Catastrophic Forgetting Score (CFS) is proposed to evaluate the gap between pre-training and fine-tuning data. Based on CFS, a subset is selected via sampling relevant data close to the down-stream ReID data and filtering irrelevant data from the pre-training dataset.  For the model structure, a ReID-specific module named IBN-based convolution stem (ICS) is proposed to bridge the domain gap by learning more invariant features. Extensive experiments have been conducted to fine-tune the pre-training models under supervised learning, unsupervised domain adaptation (UDA), and unsupervised learning (USL) settings. We successfully downscale the LUPerson dataset to 50\% with no performance degradation. Finally, we achieve state-of-the-art performance on Market-1501 and MSMT17. For example, our ViT-S/16 achieves 91.3\%/89.9\%/89.6\% mAP accuracy on Market1501 for supervised/UDA/USL ReID. Codes and models will be released to \url{https://github.com/michuanhaohao/TransReID-SSL}.
\end{abstract}
\section{Introduction}
\begin{figure}[htb]
    \centering
	\includegraphics[width=0.5\textwidth]{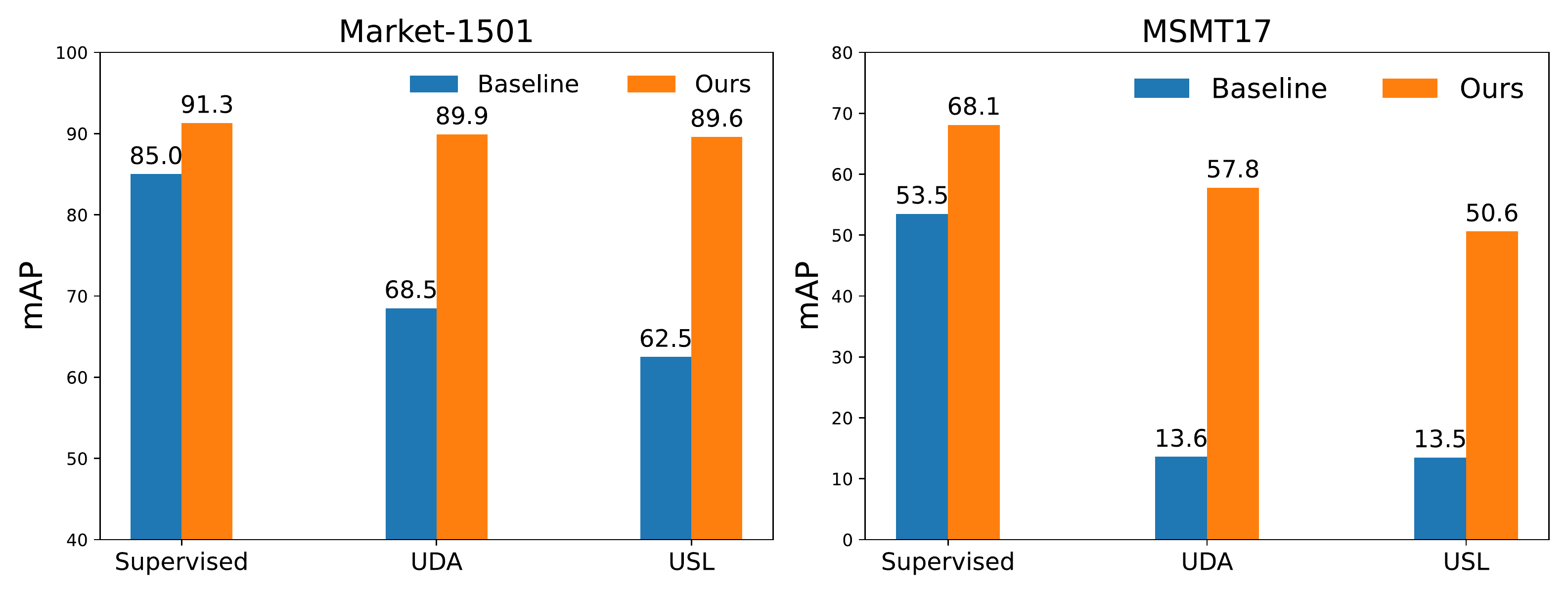}
	\caption{Performance of ViT-S on Market-1501 \cite{Market1501} and MSMT17 \cite{MSMT17} in supervised, UDA and USL ReID. Our pre-training paradigm outperforms Baseline (supervised pre-training on ImageNet + vanilla ViT) by a large margin.} 
    \vspace{-1em}
	\label{fig:intro}
\end{figure}

Transformer-based methods \cite{he2021transreid, zhang2021hat} have attracted more and more attention and achieved great performance in person ReID. For example, the pure transformer-based method TransReID \cite{he2021transreid} achieves a significant performance boost over state-of-the-art CNN-based methods. However, there exists a large domain gap between ImageNet and person ReID datasets because 1) The image content of ImageNet and ReID datasets is very different \cite{fu2021unsupervised}; 2) supervised pre-training on ImageNet is focused on category-level supervision which reduces the rich visual information \cite{dino} while person ReID prefers fine-grained identity information. As a result, Transformers need to be pre-trained on a larger-scale dataset ImageNet-21K~\cite{deng2009imagenet} to avoid over-fitting on the pre-training dataset. To bridge the gap between pre-training and fine-tuning datasets for better transformer-based ReID models, this paper tackles the problem from the perspectives of data and model structure, respectively. 

From the data view, we have seen a large-scale pre-training dataset named LUPerson being built by collecting unlabeled person images~\cite{fu2021unsupervised}, which has demonstrated that CNN-based SSL pre-training on LUPerson dataset improves ReID performance compared with ImageNet-1k pre-traininng. However, following the same paradigm and replacing the backbone with Vision Transformers (ViTs)~\cite{VIT} would get poor performance due to the huge differences between training CNNs and ViTs. It motivates us to explore an effective SSL pre-training paradigm for transformer-based person ReID in the first place. After thorough comparison between several Transformer-based self-supervised methods (\eg MocoV3 \cite{chen2021mocov3}, MoBY \cite{moby} and DINO \cite{dino}) with ViTs on the LUPerson dataset, it is found that DINO significantly outperforms other SSL methods and supervised pre-training on ImageNet, thus it is used as our following setup. Next, it is brought to our attention that, though with better performance, pre-training on LUPersonn needs a larger amount of computational resources due to the large amount of training data (3X of ImageNet-1K). Therefore, we propose to adopt \textit{conditional pre-training} to speed-up the training process and further reduce the domain gap. As LUPerson is collected from web videos, a portion of the images are of low-quality or have great domain bias with the downstream ReID datasets, so \textit{conditional filtering} shall be performed to downscale LUPerson (source domain) by selecting a relevant subset close to the downstream ReID datasets (target domain). However, previous works on conditional filtering ~\cite{chakraborty2020efficient, coleman2020selection, ge2017borrowing, cui2018large, Fine} are mainly designed for close-set tasks by selecting data close to category labels or cluster centers of the target training data. Those methods will easily overfit the person IDs instead of the true target domain if directly applied to the open-set ReID task. We propose a metric named Catastrophic Forgetting Score (CFS) to evaluate the gap between pre-training and downstream data, inspired by the conclusions in catastrophic forgetting problems~\cite{thompson2019overcoming,kirkpatrick2017overcoming,ramasesh2021anatomy}. For a pre-training image, the similarity between its features from the two proxy models (one is pre-trained on the source dataset and another is fine-tuned on the target dataset) can represent the similarity between the source and target domain. In this way, a subset of images with higher CFS scores can be selected from the pre-training data to perform efficient conditional pre-training. A preliminary theoretical analysis is conducted to justify the effectiveness of CFS.

From the perspective of model structure, some recent works \cite{chen2021mocov3, conv_stem,wang2021scaled} have pointed out that, an important factor that affects performance and stability of ViTs is the \textit{patchify stem} implemented by a stride-$p$ $p \times p$ convolution ($p = 16$ by default) on the input image. To address this problem, MocoV3 \cite{chen2021mocov3} froze the patch projection to train ViTs, while Xiao \etal~\cite{conv_stem} and Wang \etal~\cite{wang2021scaled}  proposed a \textit{convolution stem} stacked by several convolution, Batch Normalization (BN)~\cite{bn}, and ReLU~\cite{relu} layers to increase optimization stability and improve  performance. Inspired by the success of integrating Instance Normalization (IN) and BN to learn domain-invariant representation in the ReID task~\cite{BoT, ibnnet, dai2021cluster}, we refer to IBN-Net \cite{ibnnet} and improve the convolution stem to the \textit{IBN-based convolution stem (ICS)}. ICS inherits the stability of the convolution stem and also introduces IBN to learn features with appearance invariance (\eg viewpoint, pose and illumination invariance, etc.). ICS has similar computational complexity to convolution stem, but improves peak performance of ViTs significantly in person ReID.

Embraced with the above two improvements based on DINO, we conduct experiments with supervised learning, UDA and USL settings on Market1501 and MSMT17. Our pre-training paradigm helps ViTs achieve state-of-the-art performance on these benchmarks. For instance, as shown in Figure~\ref{fig:intro}, our ViT-S achieves 91.3\%/89.9\%/89.6\% mAP accuracy on Market-1501 for supervised/UDA/USL ReID, which exceed the supervised pre-training setting on ImageNet by a large margin. The pre-training cost on LUPerson is also reduced by 30\% without any performance degradation, through the proposed CFS-based conditional filtering and downscaling the original LUPerson by 50\%. 



\section{Related Works}

\subsection{Self-supervised Learning}
Self-supervised learning (SSL) methods are proposed to learn discriminative features from large-scale unlabeled data without any human-annotated labels \cite{jing2020self}. 
One branch is developed from Momentum Contrast (MoCo)~\cite{moco_v2} by treating a pair of augmentations of a sample as a positive pair and all other samples as negative pairs. Since the number of negative samples greatly affects the final performance, MoCo series~\cite{chen2021mocov3, moco_v2} require large batches or memory banks. Among them, MoCoV3~\cite{chen2021mocov3} is a Transformer-specific version. Fu \etal~\cite{fu2021unsupervised} have verified ResNet50 can be well pre-trained by a modified MoCoV2 on human images in person ReID. Many recent works have shown models can learn feature representation without discriminating between images. In this direction, Ge \etal~\cite{BYOL} propose a new paradigm called BYOL, where the online network predicts the representation of the target network on the same image under a different augmented view. Large batch size is unnecessary in BYOL since negative samples are not needed. Many variants have successfully improved BYOL in various ways. One of them is  DINO~\cite{dino}, where a centering and sharpening of the momentum teacher outputs is used to avoid model collapse. DINO achieves state-of-the-art performance with ViTs on both ImageNet classification and down-stream tasks. Xie \etal combines MoCo with BOYL to propose a Transformer-specific method called MoBY. Given that there have been more SSL methods designed specifically for Transformers, we will focus on several of the state-of-the-art options in the following experiments, \eg MoCo series, MoBY and DINO.

\subsection{Transformer-based ReID}

CNN-based methods have dominated the ReID community for many years. However, pure-transformer models are gradually becoming a popular choice. He \etal~\cite{he2021transreid} are the first to successfully apply ViTs to ReID tasks by proposing TransReID which achieves state-of-the-art performance on both person and vehicle ReID. Auto-Aligned Transformer (AAformer)~\cite{zhu2021aaformer} also uses ViT backbone with the additional learnable vectors of ``part tokens" to learn the part representations and integrates the part alignment into the self-attention. Other works try to use Transformer to aggregate features or information from CNN backbones. For example, \cite{zhang2021hat,shen2021git,PAT} integrate Transformer layers into the CNN backbone to aggregate hierarchical features and align local features. For video ReID, \cite{zhang2021spatiotemporal, liu2021video} exploit Transformer to aggregate appearance features, spatial features, and temporal features to learn a discriminative representation for a person tracklet. 

\subsection{Conditional Transfer Learning}

There are a few works~\cite{chakraborty2020efficient, coleman2020selection, ge2017borrowing, cui2018large, Fine, yan2020neural} studying how to select relevant subsets from the pre-training dataset to improve the performance when transferred to target datasets. \cite{cui2018large} use a feature extractor trained on the JFT300M~\cite{hinton2015distilling} to select the most similar source categories to the target categories in a greedy way. For each image from the target domain, Ge \etal~\cite{ge2017borrowing} search a certain number of images with similar low-level characteristics from the source domain. Shuvam \etal~\cite{chakraborty2020efficient} train the feature extractor on the target data and individually select source images that are close to cluster centers in the target domain. Yan \etal~\cite{yan2020neural} propose a Neural Data Server (NDS) to train expert models on many subsets of the pre-training dataset. Source images used to train an expert with a good target task performance are assigned high importance scores. It is noted that conditional transfer learning has mainly been studied on close-set tasks such as image classification, fine-grained recognition, object detection, etc. Therefore, these methods may be not suitable for the open-set ReID task.

\section{Self-supervised Pre-training}

As far as we know, there has been no literature studying the SSL pre-training for transformer-based ReID. Therefore we first conduct an empirical study to gain better understanding on this problem. We investigate two backbones (CNN-based ResNet50 and Transformer-based ViT), four SSL methods (MoCoV2, MoCoV3, MoBY and DINO), and two pre-training datasets (ImageNet and LUPerson). MoCoV2 used here is a modified version proposed to adapt to person ReID on ResNet50 in \cite{fu2021unsupervised}, while the other three methods, \ie MoCoV3, MoBy and DINO, are transformer-specific methods proposed on the ImageNet data. 

\subsection{Supervised Fine-tuning}

\renewcommand{\multirowsetup}{\centering}
\begin{table}[tb]\small
    \begin{center}
    \setlength\tabcolsep{5pt}
    \begin{tabular}{ccc|cc|cc}
    \hline
     \multicolumn{3}{c|}{Pre-training}  & \multicolumn{2}{c|}{Market}  & \multicolumn{2}{c}{MSMT17}\\
    Models & Methods &Data & mAP & R1 & mAP & R1\\
    \hline
    \multirow{4}{*}{R50}& Supervised &IMG &86.7 &94.8 &52.2 &76.0\\
    & MoCoV2 &LUP &88.2 &94.8 &53.3 &76.0\\
    & MoCoV3 &LUP &87.3 &95.1 &52.9 &76.8\\
    & DINO &LUP &86.5 &94.4 &51.9 &75.8\\
    \hline
    \multirow{5}{*}{ViT-S/16}& Supervised &IMG &85.0 &93.8 &53.5 &75.2\\
    &MoCoV2 &IMG &63.6 &72.1 &19.6 &36.1\\
    &MoCoV3 &IMG &81.7 &92.1 &46.6 &70.3\\
    &MoBY &IMG &83.3 &92.2 &49.1 &71.5\\
    &DINO &IMG &84.6 &93.1 &54.8 &76.7\\
    \hline
    \multirow{6}{*}{ViT-S/16}&MoCoV2 &LUP &72.1 &87.6 &27.8 &47.4\\
    &MoCoV3 &LUP &82.2 &92.1 &47.4 &70.3\\
    &MoBY &LUP &84.0 &92.9 &50.0 &73.2\\
    &DINO &LUP &\textbf{90.3} &\textbf{95.4} &\textbf{64.2} &\textbf{83.4}\\
    &DINO &LUP$^{*}$ &89.6 &95.1 &62.3 &82.6\\
    \hline
    \end{tabular}
    \end{center}
    \vspace{-1em}
    \caption{\label{tab:ssl}Comparison of different pre-training models. We pre-trained ResNet50 (R50) and ViT-S/16 on ImageNet-1K (IMG) and LUPerson (LUP) datasets. MoBY doesn't provide training settings for ResNet50. To fairly compare with ImageNet, we randomly sampled 1.28M images from LUPerson to build a subset denoted as LUP$^*$.}
    \vspace{-1em}
\end{table}
The baseline \cite{he2021transreid} we used is pre-trained on ImageNet. We fine-tune pre-trained models on two ReID benchmarks including Market-1501 (Market) and MSMT17, and the peak fine-tuning performance of all models being compared are presented in Table \ref{tab:ssl}. For convenience, a pre-trained model is marked in the form of \textit{Model+Method+Data}. For instance, R50+Supervised+IMG denotes the ResNet50 model pre-trained on the ImageNet in the supervised manner, which is the standard pre-training paradigm in most previous ReID methods. 

Some observations are made as follows. 1) MoCoV2 performs the best among all SSL methods for ResNet50, while it is much worse than the other three transformer-specific methods with ViT, which means that it is necessary to explore specific methods for transformer-based models. 2) ViT is more sensitive to a proper pre-training than ResNet50. For example, we can see that mAP using ResNet50 ranges from 51.9\% to 53.3\% on MSMT17 with different pre-training settings, while performance of ViT-S/16 differs more widely on MSMT17. 3) SSL methods on LUP consistently achieve better performance comparing to SSL with ImageNet. Even when we restrict the number of training images of LUP to be the same as IMG,  ViT-S/16+DINO+LUP$^*$ still surpasses ViT-S/16+DINO+
IMG on both benchmarks, indicating that leveraging person images is a better choice to pre-train ReID models. 4) ViT-S/16+DINO+LUP achieves 64.2\% mAP and 83.4\% Rank-1 accuracy on the MSMT17, surpassing the baseline (ViT-S/16+Supervised+IMG) by 10.7\% mAP and 8.2\% Rank-1 accuracy.

\subsection{Unsupervised Fine-tuning}

\renewcommand{\multirowsetup}{\centering}
\begin{table}[htb]\footnotesize
    \begin{center}
    \setlength\tabcolsep{2pt}
    \begin{tabular}{ccc|cc|cc}
    \hline
     \multicolumn{3}{c|}{Pre-training}  & \multicolumn{2}{c|}{Market}  & \multicolumn{2}{c}{MSMT17}\\
    Models & Methods &Data & mAP & R1 & mAP & R1\\
    \hline
    \multirow{2}{*}{R50}& Sup &IMG &82.6 &93.0 &33.1 &63.3 \\
    & SSL &LUP &84.0 &93.4 &31.4 &58.8\\
    \hline
    \multirow{3}{*}{ViT-S/16}& Sup &IMG &62.5 &80.5 &13.5 &29.9 \\
    & SSL &IMG  &68.9{\color{gray}(+6.4)} &84.3{\color{gray}(+3.8)} &14.9{\color{gray}(+1.4)} &31.0{\color{gray}(+1.1)}\\
    & SSL &LUP &87.8{\color{gray}(+25.3)} &94.4{\color{gray}(+13.9)} &38.4{\color{gray}(+24.9)} &63.8{\color{gray}(+33.9)}\\
    \hline
    \end{tabular}
    \end{center}
    \vspace{-1em}
    \caption{\label{tab:usl} Model performance in USL ReID. Supervised pre-training and self-supervised pre-training are abbreviated as SUP and SSL, respectively. }
    \vspace{-1em}
\end{table}

\renewcommand{\multirowsetup}{\centering}
\begin{table}[htb]\footnotesize
    \begin{center}
    \setlength\tabcolsep{2pt}
    \begin{tabular}{ccc|cc|cc}
    \hline
     \multicolumn{3}{c|}{Pre-training}  & \multicolumn{2}{c|}{MS2MA}  & \multicolumn{2}{c}{MA2MS}\\
    Models & Methods &Data & mAP & R1 & mAP & R1\\
    \hline
    \multirow{2}{*}{R50}& Sup &IMG & 82.4 & 92.5 & 33.4 & 60.5\\
    & SSL &LUP  & 85.1 & 94.4 &28.3 &53.8\\
    \hline
    \multirow{3}{*}{ViT-S/16}& Sup &IMG & 68.5 & 85.4 & 13.6 & 29.5   \\
    & SSL &IMG &79.7{\color{gray}(+11.2)} &90.5{\color{gray}(+5.1)} &21.8{\color{gray}(+8.2)} &41.6{\color{gray}(+12.1)} \\
    & SSL &LUP &88.5{\color{gray}(+20.0)} &95.0{\color{gray}(+9.6)} &43.9{\color{gray}(+30.3)} &67.7{\color{gray}(+38.2)} \\
    \hline
    \end{tabular}
    \end{center}
    \vspace{-1em}
    \caption{\label{tab:uda} Model performance in UDA ReID. MS2MA and MA2MS stands for MSMT17$\rightarrow$Market (MS2MA) and Market$\rightarrow$MSMT17 (MA2MS), respectively.}
    \vspace{-1em}
\end{table}

Since there has been no transformer-based baseline for unsupervised ReID, the state-of-the-art CNN-based framework C-Contrast \cite{dai2021cluster} is selected for the following experiments. We reproduce C-Contrast with ResNet50 and ViT-S/16 on both USL ReID and UDA ReID \footnote{\url{https://github.com/alibaba/cluster-contrast-reid}. All results are reproduced using the official code with the same random seed and GPU machine to reduce randomness.}. Based on observations made from Table~\ref{tab:ssl}, we choose MoCoV2 for ResNet50 and DINO for ViT-S/16 in this section in Table~\ref{tab:usl}. SSL pre-training doesn't provide large gains compared with Sup when applied with ResNet50, not to mention that performance drop is observed on MSMT17. This is consistent with the aforementioned conclusion that Transformer is more sensitive to the pre-training than CNN. In contrast, SSL pre-training on LUPerson improves the performance by a large margin for ViT-S/16. The UDA performance on MS2MA and MA2MS also yields the similar conclusion. 

\noindent\textbf{Conclusion.} DINO is the most suitable SSL method among candidate methods for transformer-based ReID. Pre-training is more crucial for Transformers than CNN models.  Transformers pre-trained on LUPerson can significantly improve the performance compared with ImageNet pre-training, indicating that bridging the domain gap between pre-training and fine-tuning datasets for transformers is more beneficial and worth doing. 

\section{Conditional Pre-training}
This section introduces the efficient conditional pre-training where models are pre-trained on a subset closer to the target domain to speed up the pre-training process while maintaining or even improving the downstream fine-tuning performance. Catastrophic Forgetting Score (CFS) is proposed to evaluate the similarity between the pre-training data and the target domain. Theoretic analysis is provided to support the method. 

\subsection{Problem Definition}

Given a target dataset $\mathcal{D}_t = (\mathcal{X}_t,\mathcal{Y}_t)$ where $\mathcal{X}_t = \{x_t^1, x_t^2, x_t^3,...,x_t^M \}$ with their ID labels $\mathcal{Y}_t$. We target to select a subset $\mathcal{D}_{s}^{\prime}$ from a large-scale source/pre-training dataset $\mathcal{D}_{s}$ where $\mathcal{X}_s = \{x_s^1, x_s^2, x_s^3,...,x_s^N \}$. 
The number of images in $\mathcal{D}_{s}^{\prime}$ is $N^{\prime} < N$. The efficient conditional pre-training is to pre-train models on $\mathcal{D}_{s}^{\prime}$, which should reduce the training cost of pre-training while maintaining or even improving performance on the target dataset. Some previous works \cite{chakraborty2020efficient, coleman2020selection, Fine} have shown that the solution is to select pre-training data close to the target domain, and we also provide theoretical analysis in Appendix to further verify this. Since ReID is a open set problem with different IDs in training and testing set, the key problem is how to design a metric to evaluate the `similarity' between the pre-training data $x_s^i, i \in [1,N]$ and the target dataset $\mathcal{D}_{t}$ (instead of the person IDs in $\mathcal{D}_{t}$).

\subsection{Catastrophic Forgetting Score}

\vspace{-1em}
\begin{algorithm}[!ht]
\caption{Our Proposed Conditional Filtering}\label{alg:clustering}
\begin{algorithmic}[1]
\Procedure{Filter}{$\mathcal{D}_{s},\mathcal{D}_{t}$}
\State $\theta_s \gets TRAIN(\mathcal{D}_{s})$ \Comment{Source Proxy Model}
\State $\theta_t \gets TRAIN(\theta_s, \mathcal{D}_{t})$ \Comment{Target Proxy Model}
\For{$i \gets 1$ to $N$} \Comment{$x_s^i \in \mathcal{D}_{s}$}
\State $c_s^i \gets CFS(x_s^i)$ \Comment{Compute CFS}
\EndFor
\State $c_s \gets SORT(c_s^1, c_s^2,...,c_s^N)$ \Comment{Get Score Set}
\State $\mathcal{D}_{s}^{\prime} \gets TOP(\mathcal{D}_{s}, N^{\prime},c_s)$ \Comment{Filter Source Dataset}
\State \textbf{return} $\mathcal{D}_{s}^{'}$\Comment{Return the Filtered Subset}
\EndProcedure
\end{algorithmic}
\end{algorithm}

We first pre-train a model $\theta_s$ on the source dataset $\mathcal{D}_{s}$. $\theta_s$ is transferred into $\theta_t$ by fine-tuning on the target dataset $\mathcal{D}_{t}$. Since $\theta_s$ and $\theta_t$ only serve as proxy models to select data, there is no need for them to achieve the best ReID performance, \ie they can be lightweight models trained with less epochs. In this paper, $\mathcal{D}_{t}$ is the fusion of Market-1501 and MSMT17, so we only need to select one subset $\mathcal{D}_{s}^{\prime}$ sharing between different ReID datasets.

Many previous works \cite{thompson2019overcoming,kirkpatrick2017overcoming,ramasesh2021anatomy} have observed that catastrophic forgetting in neural networks occurs during domain transferring, and the degree of forgetting is related to the gap between the source domain and the target domain. To evaluate the domain gap between the pre-training data and the target domain, a simple metric named Catastrophic Forgetting Score (CFS) is proposed as below, which computes the representation similarity between $\theta_s(x_s^i)$ and $\theta_t(x_s^i)$ for the pre-training data $x_s^i$:
\begin{align}\label{CFS}
c_s^i = \frac{\langle\theta_s(x_s^i), \theta_t(x_s^i)\rangle}
{||\theta_s(x_s^i)||\, ||\theta_t(x_s^i)||}.
\end{align}
The greater the $c_s^i$, the smaller the degree of forgetting, \ie the smaller the domain gap.  $c_s = \{c_s^1, c_s^2,...,c_s^N\}$ is sorted in descending order and the top $N^{\prime}$ images are selected to get the subset $\mathcal{D}_{s}^{\prime} \in \mathcal{D}_{s}$.

The advantage of  CFS is that it does not compare $x_s^i$ directly with the images in the target domain. That is, unlike previous methods designed for close-set tasks, it avoids scanning through all images from the target domain to compute $c_s^i$, which is computationally costly. 

\subsection{Theoretical Analysis of CFS}\label{subsec:math:cfs}
In this subsection, we give a theoretical analysis to show large CFS is a necessary condition for finding an image in the source domain that is close to the target domain. To this end, we first rigorously define some terminologies as follows, which will be used in the analysis.
\begin{definition}
The feature representation $\theta$ is a function of $x\in\X$ that maps to $\R^d$, i.e., $\theta(x): \X\rightarrow\R^d$.
\end{definition}
The feature representation $\theta$ is a function extracting features of an image. Without loss of generality, we use the normalized representation function in our analysis, i.e., 
\begin{align}\label{def:tilde:theta}
    \tilde\theta := \frac{\theta}{\|\theta\|}, 
\end{align}
where $\|\cdot\|$ is the norm.
\begin{definition}
We say $x$ and $x'$ are $(
\epsilon,\theta_1,\theta_2)$-similar, if $\|\tilde\theta_1(x)-\tilde\theta_2(x')\|\le \epsilon$, where $\epsilon\in(0,1)$ is a small constant, $\theta_1$ and $\theta_2$ are two representation functions, $\tilde\theta_1$ and $\tilde\theta_2$ are defined as (\ref{def:tilde:theta}). In addition, if $\theta_1 = \theta_2 := \theta$, we use $(
\epsilon,\theta)$-similar for simplicity.
\end{definition}
The similarity of two images can be defined as the ``closeness" of their features. This is reasonable since the feature is considered as the representation of an image.
\begin{ass}\label{ass:similarity}
If $x\in\D$ and $x'\in\D'$ are close, then they are $(\epsilon/2,\theta,\theta')$-similar, i.e.,
   $\|\tilde\theta(x) - \tilde\theta'(x')\|\le\epsilon/2$,
where $\theta:\D\rightarrow\R^d$ and $\theta':\D'\rightarrow\R^d$ are two representation functions, $\tilde\theta$ and $\tilde\theta'$ are defined as (\ref{def:tilde:theta}).
\end{ass}
Since our inference focuses on downstream tasks, it is interested in considering the representation for the target domain (i.e., $\theta_t$). The main goal is to find an image in the source domain that is $(\epsilon, \theta_t)$-similar to an image from the target domain. Mathematically, we aim to find $x_s^i\in\D_s$ to be $(
\epsilon,\tilde\theta_t)$-similar to $x_t^j\in\D_t$\footnote{Please note that $x_t^j$ could not be a true image from target domain and it is for the proof use only. In general, the image $x_t^j$ can be a virtual one that follows the same distribution of the images of target domain.}, that is,
\begin{align}\label{eqn:dis:1}
    \|\tilde\theta_t(x_s^i) -  \tilde\theta_t(x_t^j)\|\le \epsilon.
\end{align}
The following theorem shows that the large CFS is a necessary condition for finding such an image in the source domain.
\begin{thm}\label{main:CFS}
For a given $x_s^i\in\D_s$, if there exists a $x_t^j\in\D_t$ that satisfies Assumption~\ref{ass:similarity}, then we have the following result. If
\begin{align}\label{CFS:eqn}
    c_s^i \ge 1-\epsilon^2/8,
\end{align}
then (\ref{eqn:dis:1}) holds, where $c_s^i$ is defined in (\ref{CFS}) and $\epsilon\in(0,1)$ is a small constant.
\end{thm}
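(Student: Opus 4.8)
The plan is to obtain the target inequality (\ref{eqn:dis:1}) from a single triangle‑inequality argument in $\R^d$, fed by two elementary bounds: one coming from the CFS hypothesis (\ref{CFS:eqn}) and one coming from Assumption~\ref{ass:similarity}. Throughout I would work with the normalized representations $\tilde\theta_s,\tilde\theta_t$ of (\ref{def:tilde:theta}), since both the CFS in (\ref{CFS}) and the notion of $(\epsilon,\theta)$-similarity are phrased in terms of these unit vectors. The key observation is that all three vectors $\tilde\theta_s(x_s^i)$, $\tilde\theta_t(x_s^i)$, and $\tilde\theta_t(x_t^j)$ have norm $1$, so chord lengths on the unit sphere are controlled by inner products.

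First I would translate the hypothesis $c_s^i\ge 1-\epsilon^2/8$ into a bound on the distance between the two feature maps of the \emph{same} image $x_s^i$. By definition (\ref{CFS}) together with (\ref{def:tilde:theta}), $c_s^i=\langle\tilde\theta_s(x_s^i),\tilde\theta_t(x_s^i)\rangle$, and since both arguments are unit vectors the polarization identity gives $\|\tilde\theta_s(x_s^i)-\tilde\theta_t(x_s^i)\|^2 = 2-2c_s^i$. Substituting $c_s^i\ge 1-\epsilon^2/8$ yields $\|\tilde\theta_s(x_s^i)-\tilde\theta_t(x_s^i)\|^2\le \epsilon^2/4$, i.e.\ $\|\tilde\theta_s(x_s^i)-\tilde\theta_t(x_s^i)\|\le \epsilon/2$. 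The constant $\epsilon^2/8$ in (\ref{CFS:eqn}) is exactly what is needed so that this ``forgetting gap'' consumes only half of the total $\epsilon$ budget.

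Next I would invoke Assumption~\ref{ass:similarity} with the instantiation $\D=\D_s$, $\theta=\theta_s$, $\D'=\D_t$, $\theta'=\theta_t$: the hypothesis that there is an $x_t^j\in\D_t$ close to $x_s^i$ gives the cross-term bound $\|\tilde\theta_s(x_s^i)-\tilde\theta_t(x_t^j)\|\le \epsilon/2$. Combining the two bounds by the triangle inequality,
\begin{align*}
\|\tilde\theta_t(x_s^i)-\tilde\theta_t(x_t^j)\|
&\le \|\tilde\theta_t(x_s^i)-\tilde\theta_s(x_s^i)\| + \|\tilde\theta_s(x_s^i)-\tilde\theta_t(x_t^j)\| \\
&\le \epsilon/2+\epsilon/2 = \epsilon,
\end{align*}
which is precisely (\ref{eqn:dis:1}), completing the proof.

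There is no serious technical obstacle here — the entire argument is a two-step chain of elementary estimates. The only place that genuinely requires care is bookkeeping: keeping straight which representation function ($\theta_s$ versus $\theta_t$) and which image ($x_s^i$ versus the auxiliary $x_t^j$) appears in each term, so that Assumption~\ref{ass:similarity} is applied in the genuinely cross form ($\theta_s$ on a source image, $\theta_t$ on a target image) while the CFS identity is applied to a single image $x_s^i$ under the two proxy models. I would also note for the reader that the converse need not hold, so large CFS is being used here as the claimed necessary-type screening criterion rather than a sufficient one, which is consistent with how Theorem~\ref{main:CFS} is positioned in the preceding discussion.
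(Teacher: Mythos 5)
Your proof is correct and follows essentially the same route as the paper's: convert the CFS bound into $\|\tilde\theta_s(x_s^i)-\tilde\theta_t(x_s^i)\|\le\epsilon/2$ via the identity $\|\tilde\theta_s(x_s^i)-\tilde\theta_t(x_s^i)\|^2=2-2c_s^i$ for unit vectors, invoke Assumption~\ref{ass:similarity} for the cross term, and chain the two with the triangle inequality. Your version is in fact slightly cleaner, since it bypasses the paper's unnecessary two-case split on whether $d_{s,t}$ exceeds $\|\tilde\theta_s(x_s^i)-\tilde\theta_t(x_t^j)\|$ and goes directly to the $\epsilon/2+\epsilon/2$ bound.
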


\begin{proof}\renewcommand{\qedsymbol}{}
Let define the distance 
\begin{align}\label{eqn:dis:condition}
    d_{s,t} := \|\tilde\theta_t(x_s^i) - \tilde\theta_s(x_s^i)\|.
\end{align}
We write the distance $\|\tilde\theta_t(x_s^i) -  \tilde\theta_t(x_t^j)\|$ as
\begin{align}\label{eqn:dis:2}
    \nonumber &\|\tilde\theta_t(x_s^i) -  \tilde\theta_t(x_t^j)\| \\
    \nonumber = &
    \|\tilde\theta_t(x_s^i) - \tilde\theta_s(x_s^i) + \tilde\theta_s(x_s^i) - \tilde\theta_t(x_t^j)\| \\
    \le & 
    d_{s,t} + \|\tilde\theta_s(x_s^i) - \tilde\theta_t(x_t^j)\|,
\end{align}
where the last inequality uses Triangle Inequality for norm. 
Furthermore, if $x_s^i$ and $x_t^j$ are close, then under Assumption~\ref{ass:similarity} we have
\begin{align}\label{eqn:dis:3}
    \|\tilde\theta_s(x_s^i) - \tilde\theta_t(x_t^j)\| \le \epsilon/2.
\end{align}
Next, we want to explain why the small distance $ d_{s,t}$ can be used as the selection criterion, let consider two cases. When $d_{s,t} \ge \|\tilde\theta_s(x_s^i) - \tilde\theta_t(x_t^j)\|$, by (\ref{eqn:dis:condition}) and (\ref{eqn:dis:2}) we have
\begin{align} \label{eqn:dis:condition:1}
    \|\tilde\theta_t(x_s^i) -  \tilde\theta_t(x_t^j)\| \le 2d_{s,t}.
\end{align}
When $d_{s,t} \le \|\tilde\theta_s(x_s^i) - \tilde\theta_t(x_t^j)\|$, by (\ref{eqn:dis:2}) and (\ref{eqn:dis:3}), we have
\begin{align} \label{eqn:dis:condition:2}
   \|\tilde\theta_t(x_s^i) -  \tilde\theta_t(x_t^j)\| \le \epsilon
\end{align}
Thus, if the distance $d_{s,t}\le\epsilon/2$, then by (\ref{eqn:dis:condition:1}) and (\ref{eqn:dis:condition:2}) we know $x_t^j$ and $x_s^i$ are $(\epsilon,\tilde\theta_t)$-similar, i.e., (\ref{eqn:dis:1}) holds.

The above condition of $d_{s,t}\le\epsilon/2$ is equivalent to (\ref{CFS:eqn}) 
since $d_{s,t}^2 = \|\tilde\theta_t(x_s^i) - \tilde\theta_s(x_s^i)\|^2 = \|\tilde\theta_t(x_s^i) \|^2 + \| \tilde\theta_s(x_s^i)\| - 2 \langle \tilde\theta_s(x_s^i), \tilde\theta_t(x_s^i)\rangle = 2 - 2 \langle \tilde\theta_t(x_s^i), \tilde\theta_s(x_s^i)\rangle = 2-2c_s^i$. So the greater $c_s^i$ is, the smaller $d_{s,t}$ is.
\end{proof}

Although larger value of CFS is proved to be only a necessary condition for finding a source domain image close to the downstream task, Table~\ref{tab:cfs} also empirically shows that this metric is effective in practice.

\section{IBN-based Convolution Stem}
\begin{figure}[htb]
    \centering
    \vspace{-1em}
	\includegraphics[width=0.48\textwidth]{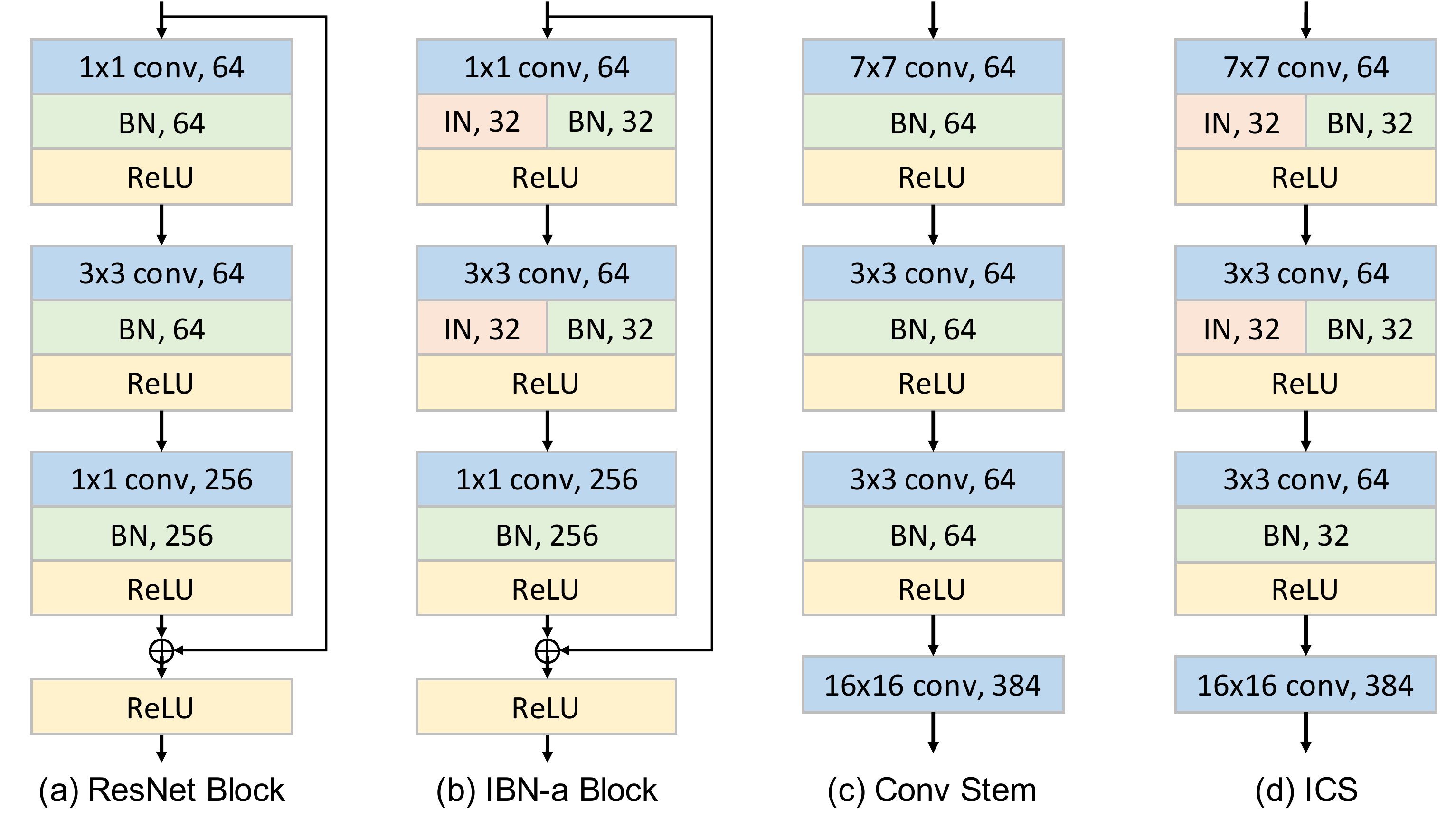}
	\caption{Comparison of different modules. We refer to the IBNNet-a block to design the IBN-based Convolution Stem (ICS). (a) ResNet block, (b) IBNNet-a block \cite{ibnnet}, (c) vanilla convolution stem \cite{conv_stem}, (d) our proposed ICS.} 
	\label{fig:ibn}
\end{figure}

Our architecture design is based on two important experiences. 1) The \textit{patchity stem} implemented by a stride-$p$ $p \times p$ convolution ($p = 16$ by default) in the standard ViT is the key reason of training instability~\cite{chen2021mocov3}. Recent works show the \textit{convolution stem}~\cite{conv_stem,wang2021scaled} stacked with several convolution, BN and ReLU layers can significantly improve training stability and peak performance. 2) Data bias is a critical challenge for person ReID. By learning invariant representation, IBNNet-a block~\cite{ibnnet} has achieved a huge success in many publications or academic challenges \cite{luo2019strong, dai2021cluster}.
    
When applying ViTs in ReID tasks, a straightforward way is to introduce IBNNet-a block into the convolution stem. Following the original design of IBNNet-a, we propose \textit{IBN-based convolution stem (ICS)}, as shown in Figure~\ref{fig:ibn}, by applying BN for half of the channels and IN for the other half after shallow convolution layers, and applying only BN layers after deep convolution layers. In this paper, we choose to apply IN after the first two convolution layers. Another variant worth consideration but may slightly reduce performance is to only apply IN after the first convolution layer. The kernel sizes, channels and strides of the convolution stem are kept the same as in \cite{conv_stem}. Hereinafter, we denote ViT with patchify stem, convolution stem, and ICS as ViT, ViT$_C$ and ViT$_I$, respectively

\section{Experiments}

\subsection{Implementation Details}

\noindent\textbf{Datasets.} The dataset used for pre-training, LUPerson~\cite{fu2021unsupervised}, contains 4.18M unlabeled human images in total, collected from 50,534 online videos. To evaluate the performance on ReID tasks, we conduct experiments on two popular benchmarks, \ie Market-1501 (Market) \cite{Market1501} and MSMT17 \cite{MSMT17}. They contain 32,668 images of 1,501 identities and 126,441 images of 4,101 identities, respectively. Images in these two datasets are resized to $256 \times 128$ during training and inference stages. Standard evaluation protocols are used with the metrics of mean Average Precision (mAP) and Rank-1 accuracy.

\noindent\textbf{Pre-training.} Unless otherwise specified, we follow the default training settings of all self-supervised methods. Images are resized to $224 \times 224$ and $256 \times 128$ for ImageNet and LUPerson, respectively. For pre-training on LUPerson with DINO, the model is trained on $8 \times$V100 GPUs for 100 epochs and a multi-crop strategy is applied to crop 8 images with $96 \times 48$ resolution. 

\noindent\textbf{Supervised ReID.} The transformer-based baseline proposed in \cite{he2021transreid} is used as our baseline in this paper. That is to say, none of the overlapping patch embedding, jigsaw patch module or side information embedding is included here. It is noticed that DINO models should be fine-tuned with a small learning rate and a longer warm-up process, so we recommend setting the learning rate to $lr = 0.0004 \times \frac{batchsize}{64}$ and warm-up epochs to be 20. All other settings are same with the original paper.

\noindent\textbf{USL/UDA ReID.} We follow most of the default settings in \cite{dai2021cluster}. USL ReID and UDA ReID share the same training settings in this paper, and the only difference is that the models for UDA ReID need to be first pre-trained on source datasets before training in an unsupervised manner on target datasets. The maximum distance $d$ between two samples is set to 0.6 and 0.7 for Market1501 and MSMT17, respectively. We use SGD optimizer to train ViTs for 50 epochs. The initial learning rate is set to 3.5e-4, and is reduced 10 times for every 20 epochs. Each mini-batch contains 256 images of 32 person IDs, \ie each ID contains 8 images. The rate of stochastic depth is set 0.3.

\subsection{Results of Conditional Pre-training}

\vspace{-1em}
\begin{figure}[htb]
    \begin{minipage}[t]{0.48\linewidth}
    \centering
    \includegraphics[width=1.05\textwidth]{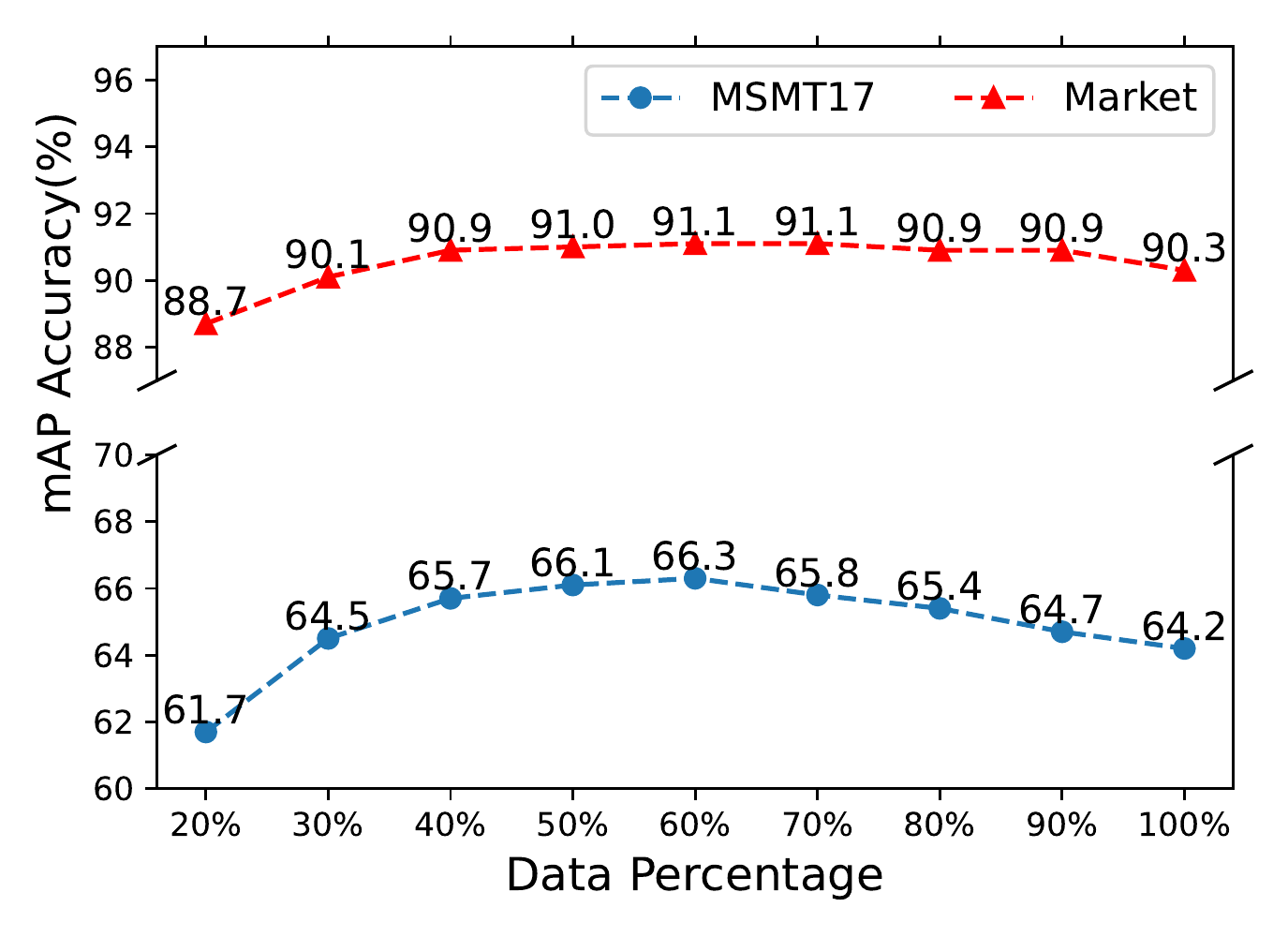}
    \end{minipage}
    \begin{minipage}[t]{0.49\linewidth}
    \centering
    \includegraphics[width=1.05\textwidth]{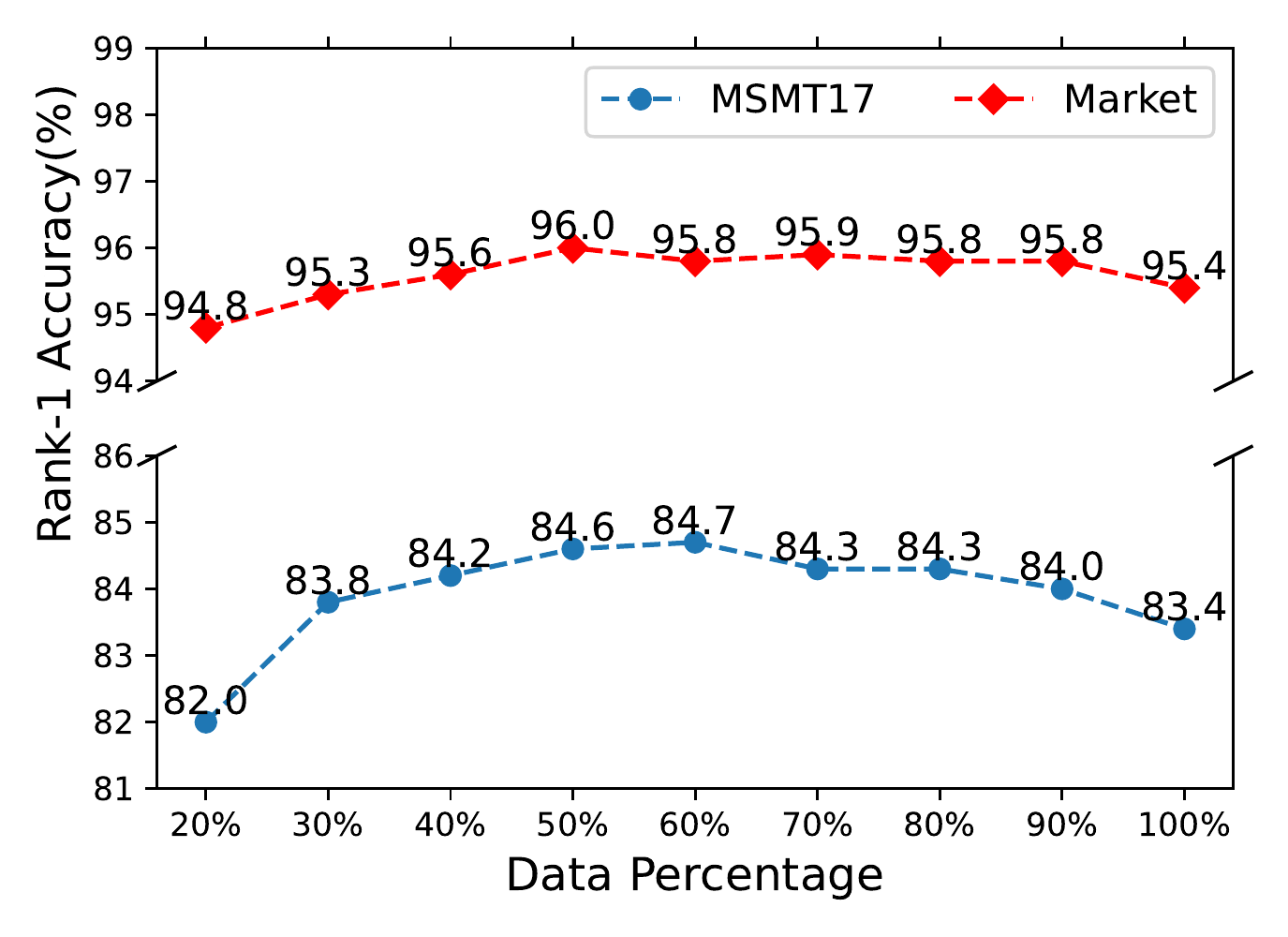}
    \end{minipage}
	\caption{Supervised fine-tuning performance of our conditional pre-training with different percentages of pre-training data. All results are achieved by ViT-S. Left and right figures show mAP and Rank-1 accuracy, respectively.} 
	\label{fig:cfs}
\end{figure}

\begin{table*}
\begin{minipage}[t]{0.48\linewidth}
    \begin{center}
    \resizebox{0.9\linewidth}{!}{
        \begin{tabular}{c|cc|cc}
        \hline
        \multirow{2}{*}{Pre-training} & \multicolumn{2}{c|}{Market}  & \multicolumn{2}{c}{MSMT17}\\
         & mAP & R1  & mAP & R1\\
        \hline
        Full (100\%)    &90.3 &95.4 &64.2 &83.4 \\
        Random (50\%)   &89.9 &95.2 &62.9 &82.6 \\
        Cluster (50\%) \cite{chakraborty2020efficient} &90.0 &95.4 &63.8 &83.3 \\
        CFS (50\%) &\textbf{91.0} &\textbf{96.0} &\textbf{66.1} &\textbf{84.6} \\
        \hline
        \end{tabular}
        }
    \end{center}
    \vspace{-1em}
    \caption{\label{tab:cfs} Comparison of different data selection strategies. Random sampling (Random) and selecting data close to cluster centers (Cluster) are compared.}
\end{minipage}
\hfill
\begin{minipage}[t]{0.48\linewidth}
    \begin{center}
    \resizebox{0.9\linewidth}{!}{
        \begin{tabular}{c|cc|cc}
        \hline
         \multirow{2}{*}{Pre-training} & \multicolumn{2}{c|}{Market}  & \multicolumn{2}{c}{MSMT17}\\
         & mAP & R1 & mAP & R1\\
        \hline
        MocoV2+R50 \cite{fu2021unsupervised}&88.2 &94.8&53.3 &76.0 \\
        +CFS (50\%) &89.4 &95.5 &56.8 &78.8 \\
        \hline
        DINO+ViT-S  &90.3 &95.4 &64.2 &83.4 \\
        +CFS (50\%)  &91.0 &96.0 &66.1 &84.6 \\
        \hline
        \end{tabular}
    }
    \end{center}
    \vspace{-1em}
    \caption{\label{tab:cfs2} Conditional pre-training with different SSL pre-training models. We sample 50\% pre-training data based on CFS.}
\end{minipage}
\end{table*}

\renewcommand{\multirowsetup}{\centering}
\begin{table*}[tb]\small
    \begin{center}
    \begin{tabular}{cc||cc|cc||cc|cc||cc|cc}
    \hline
    & & \multicolumn{4}{c||}{Supervised ReID}  & \multicolumn{4}{c||}{USL ReID} & \multicolumn{4}{c}{UDA ReID}\\
    \cline{3-14}
     \multicolumn{2}{c||}{Pre-training}& \multicolumn{2}{c|}{Market}  & \multicolumn{2}{c||}{MSMT17} & \multicolumn{2}{c|}{Market}  & \multicolumn{2}{c||}{MSMT17} & \multicolumn{2}{c|}{MS2MA} & \multicolumn{2}{c}{MA2MS} \\
    Data & Model & mAP & R1 & mAP & R1 & mAP & R1 & mAP & R1 & mAP & R1 & mAP & R1 \\
    \hline
    \multirow{4}{*}{\shortstack{Full\\(100\%)}}&ViT-S/16
                & 90.3 & 95.4 & 64.2 & 83.4
                & 87.8 & 94.4 & 38.4 & 63.8
                & 88.5 & 95.0 & 43.9 & 67.7\\
    & ViT$_C$-S/16
                & 90.7 & 95.7 & 65.2 & 84.5
                & 88.3 & 94.6 & 39.7 & 65.2
                & 89.1 & 95.3 & 49.0 & 73.4\\
    & \multirow{2}{*}{ViT$_I$-S/16}
                & 91.1 & 95.9 & 66.8 & 85.5
                & 89.3 & 94.8 & 48.8 & 74.4
                & 89.6 & 95.6 & 55.0 & 77.9\\
    &           & {\color{gray}+0.8} &{\color{gray}+0.5} & {\color{gray}+2.6} & {\color{gray}+2.1} 
                & {\color{gray}+1.5} & {\color{gray}+0.4} & {\color{gray}+10.4} & {\color{gray}+10.6} 
                & {\color{gray}+1.1} & {\color{gray}+0.6} & {\color{gray}+11.1} & {\color{gray}+10.2}\\
    \hline
    \multirow{4}{*}{\shortstack{CFS\\(50\%)}}&ViT-S/16
                & 91.0 & 96.0 & 66.1 & 84.6 
                & 88.2 & 94.2 & 40.9 & 66.4
                & 89.4 & 95.4 & 47.4 & 70.8\\
    & ViT$_C$-S/16
                & 91.2 & 95.8 &67.8 & 85.7
                & 89.3 & 95.0 & 42.5 & 67.6
                & 89.7 & 95.5 & 55.7 & 75.5\\
    & \multirow{2}{*}{ViT$_I$-S/16}
                & 91.3 & 96.2 &68.1 & 86.1
                & 89.6 & 95.3 & 50.6 & 75.0
                & 89.9 & 95.5 & 57.8 & 79.5\\
    &           & {\color{gray}+0.3} &{\color{gray}+0.2} & {\color{gray}+2.0} & {\color{gray}+2.5} 
                & {\color{gray}+1.4} & {\color{gray}+0.9} & {\color{gray}+9.7} & {\color{gray}+8.6} 
                & {\color{gray}+0.5} & {\color{gray}+0.1} & {\color{gray}+10.4} & {\color{gray}+8.7}\\
    \hline
    \end{tabular}
    \end{center}
    \vspace{-1em}
    \caption{\label{tab:ibn}Comparison of patchify stem (ViT-S/16), convolution stem (ViT$_C$-S/16) and the proposed ICS (ViT$_I$-S/16). Gray numbers present performance improvements from the proposed ICS.}
    \vspace{-1em}
\end{table*}

\noindent\textbf{Effect of partial data.} As shown in Figure \ref{fig:cfs}, pre-training with 30\% data achieves comparable performance with the full-data pre-training. It is surprising that 40\%$\sim$60\% pre-training data even improves the performance slightly(\eg 66.3\% vs 64.2\% mAP on MSMT17). Therefore, filtering irrelevant data with CFS from the pre-training data is actually beneficial instead of harmful to the down-stream performance. For a better trade-off between accuracy and pre-training cost,  50\% of the pre-training data will be sampled in our following experiments.

\noindent\textbf{Time consumption of pre-training.} Although our conditional pre-training is two-stage, it is still more efficient. Standard pre-training of a ViT-S on full LUPerson takes 107 hours with 8$\times$V100 GPUs. In our conditional pre-training, data selection step takes 21 hours because the proxy models only need to be pre-trained for 20 epochs. For the second stage, pre-training the model on 50\% of LUPerson only takes 51 hours. In total we can still save about 30\% of pre-training time with a slight performance improvement.

\noindent\textbf{Selection strategies.} Different selection strategies are compared in Table~\ref{tab:cfs}. Randomly sampling 50\% data for pre-training does not help with the downnstream performance because it cannot reduce the noise in the original data. Although Cluster~\cite{chakraborty2020efficient} is a state-of-the-art data selection method proposed for close-set tasks, it is not suitable for the open-set person ReID problem. Our conditional pre-training based on CFS performs  the best on all benchmarks, even outperforming the full-data pre-training version, which shows its effectiveness. Some image examples selected by our method are presented in Appendix.

\noindent\textbf{Different Pre-training models.} We evaluate the effectiveness of our conditional pre-training with both ResNet50 and ViT-S/16 backbone in Table~\ref{tab:cfs2}. Consistent improvements can be observed for these two different SSL paradigms, which demonstrates the universality of our method. Table~\ref{tab:ibn} also provides more results of different ViT-S/16 models. 

\subsection{Effectiveness of ICS}

To evaluate the effectiveness of the proposed ICS, we perform a fair comparison between patchify stem, convolution stem and ICS with the ViT-S/16 backbone for both full-data pre-training and the conditional pre-retraining in Table~\ref{tab:ibn}. Three settings (\eg supervised, USL and UDA ReID) are all included for a comprehensive evaluation.

We can observe that ViT$_I$-S/16 outperforms ViT-S/16 and ViT$_C$-S/16 in most cases. For instance, for the full-data pre-training, ViT$_I$-S/16 improves the mAP by 10.4\% and 9.1\% in USL ReID on MSMT compared with ViT-S$_P$/16 and ViT$_C$-S/16, respectively. For the conditional pre-training, ICS can also consistently improve peak performance under supervised, UDA and USL settings. Therefore, ICS is proved to be an effective module for the ReID task. An interesting phenomenon is that the performance gain of ICS decreases in the conditional pre-training case, because the domain bias between pre-training and down-stream datasets has been mitigated.

\noindent{\textbf{Model Complexity and Computational Costs.}} We keep same number of transformer blocks in Table \ref{tab:ibn} to clearly compare different patch embeddings. The difference in computational complexity between ICS and convolution stem is negligible. \cite{conv_stem} has shown that the convolution stem has approximately the same complexity as a single transformer block, and adding the convolution stem while removing one transformer block can maintain similar model complexity without affecting accuracy. To further verify this point, extra experiments are conducted by removing one transformer block in the CFS-based supervised setting ( ViT$_I$-S/16 in ``CFS'' section of Table~\ref{tab:ibn}), we achieve 91.2\% mAP and 96.1\% rank-1 accuracy on Market, almost the same as the numbers in Table~\ref{tab:ibn}. Therefore, it is reasonable to expect that the performance of our other experiments will be unchanged if we remove a transformer block to maintain the complexity comparable to the vanilla ViT models.

\noindent{\textbf{Remark.}} We have also tried the  trick proposed in MocoV3~\cite{chen2021mocov3} to freeze patchify stem. It can bring in slight performance increase but is still worse than convolution stem and ICS. More experiments are included in the Appendix to show that ICS  learns more invariant feature representations.

\subsection{Comparison to State-of-the-Art methods}

\begin{table}[tb]\small
    \begin{center}
    \begin{tabular}{c@{\hspace{1pt}}c|cc|cc}
    \hline
     &  & \multicolumn{2}{c|}{Market}  & \multicolumn{2}{c}{MSMT17}\\
    Methods &Backbone & mAP & R1 & mAP & R1\\
    \hline
    BOT \cite{luo2019strong} & R50-IBN & 88.2 & 95.0 & 57.8 & 80.7 \\
    BOT$^*$ \cite{luo2019strong} & R50-IBN & 90.1 & 95.6 & 60.8 & 81.4  \\
    MGN \cite{MGN} & R50$\uparrow$384 &87.5 &95.1 &63.7 &85.1 \\
    SCSN \cite{SCSN} & R50$\uparrow$384 &88.5 &95.7 &58.5 &83.8 \\
    ABDNet \cite{ABD-Net} & R50$\uparrow$384 &88.3 &95.6 &60.8 &82.3\\
    TransReID\cite{he2021transreid} & ViT-B$\uparrow$384 &89.5 &95.2 &\underline{69.4} &\underline{89.2} \\
    MoCoV2$^*$ \cite{fu2021unsupervised} & MGN$\uparrow$384 & \underline{91.0} & \underline{96.4} & 65.7 & 85.5 \\
    \hline
    Ours$^*$ & ViT$_I$-S&91.3 &96.2 &68.1&86.1 \\
    Ours$^*$ & ViT$_I$-S$\uparrow$384 &91.7 &96.3 &70.5 &87.8 \\
    Ours$^*$ & ViT$_I$-B$\uparrow$384 &\textbf{93.2 }&\textbf{96.7} &\textbf{75.0} &\textbf{89.5}\\
    \hline
    \end{tabular}
    \end{center}
    \vspace{-1em}
    \caption{\label{tab:sota1}Comparison to state-of-the-art methods in supervised ReID. MGN is the improved version in fast-reid. $^*$ means that backbones are pre-trained on LUPerson. $\uparrow$384 represents that images are resized to $384 \times 128$. From the perspective of computational complexity, ViT$_I$-S and ViT$_I$-B can be compared with R50/R50-IBN and MGN, respectively. R50-IBN stands for ResNet50-IBN-a.}
\end{table}

\textbf{Supervised ReID.} We compare to some of the state-of-the-art methods on supervised ReID in Table~\ref{tab:sota1}. Our ViT-S$\uparrow$384 outperforms MGN and ABDNet on both benchmarks by a large margin. It is worth noting that TransReID is pre-trained on \textbf{\textit{ ImageNet-21K}} and integrates additional camera information and part features to achieve the current performance. With self-supervised pre-training on LUPerson, our ViT-B$\uparrow$384 obtains 93.2\%/75.0\% mAP and 96.7\%/89.5\% rank-1 accuracy on Market1501/MSMT17 datasets, significantly outperforming TransReID with no additional modules. It is also  observed that MGN benefits from the self-supervised pre-training on LUPerson via MocoV2, but is still inferior to our results.

\begin{table}[tb]\small
    \begin{center}
    \begin{tabular}{cc|cc|cc}
    \hline
     &  & \multicolumn{2}{c|}{Market} & \multicolumn{2}{c}{MSMT17}\\
    Methods &Backbone & mAP & R1 & mAP & R1\\
    \hline
    MMCL \cite{MMCL} &R50 & 45.5 & 80.3 &11.2 & 35.4\\
    HCT \cite{HCT} & R50 & 56.4 & 80.0 & - & -\\ 
    IICS \cite{IICS} &R50 & 72.9 & 89.5 &26.9&52.4\\
    SPCL$^*$\cite{SPCL} & R50 &76.2 &90.2 &-&-\\
    C-Contrast \cite{dai2021cluster} & R50 &82.6 &93.0 &33.1 &63.3 \\
    C-Contrast$^*$ \cite{dai2021cluster} & R50 &84.0 &93.4 &31.4 &58.8 \\
    C-Contrast$^*$ \cite{dai2021cluster} & R50-IBN & \underline{86.4} & \underline{94.2} & \underline{39.8} & \underline{66.1}  \\
    \hline
    Ours$^*$ & ViT-S&88.2 &94.2 &40.9 &66.4 \\
    Ours$^*$ & ViT$_I$-S&\textbf{89.6} &\textbf{95.3} &\textbf{50.6}&\textbf{75.0 }\\
    \hline
    \end{tabular}\end{center}
    \vspace{-1em}
    \caption{\label{tab:sota2}Comparison to state-of-the-art methods in USL ReID. $^*$ means that backbones are pre-trained on LUPerson.}
\end{table}

\begin{table}[tb]\small
    \begin{center}
    \begin{tabular}{cc|cc|cc}
    \hline
    &  &\multicolumn{2}{c|}{MS2MA}  &\multicolumn{2}{c}{MA2MS}\\
    Methods &Backbone & mAP & R1 & mAP & R1\\
    \hline
    DG-Net++ \cite{DG-Net++}& R50 &64.6 & 83.1 & 22.1 &48.4\\
    MMT\cite{MMT} & R50 &75.6 & 89.3 & 24.0 & 50.1 \\
    SPCL\cite{SPCL} & R50 &77.5 &89.7 &26.8 &53.7 \\
    SPCL\cite{SPCL} & R50-IBN &79.9 &92.0 &31.0 &58.1 \\
    C-Contrast \cite{dai2021cluster} & R50 &82.4 &92.5 &33.4 &60.5 \\
    C-Contrast$^*$ \cite{dai2021cluster} & R50 &85.1 &94.4 &28.3 &53.8 \\
    C-Contrast$^*$ \cite{dai2021cluster} & R50-IBN  & \underline{86.9} & \underline{94.6} & \underline{42.6} & \underline{69.1}\\
    \hline
    Ours$^*$ & ViT-S    &89.4 & 95.4 & 47.4 & 70.8\\
    Ours$^*$ & ViT$_I$-S&\textbf{89.9} &\textbf{95.5} &\textbf{57.8} &\textbf{79.5}\\
    \hline
    \end{tabular}\end{center}
    \vspace{-1em}
    \caption{\label{tab:sota3}Comparison to state-of-the-art methods in UDA ReID. $^*$ means that backbones are pre-trained on LUPerson.}
\end{table}

\textbf{USL ReID.} Our methods are compared to MMCL \cite{MMCL}, HCT \cite{HCT}, IICS \cite{IICS}, SPCL\cite{SPCL} and C-Contrast \cite{dai2021cluster} in Table~\ref{tab:sota2}, where the last two methods also adopt the pre-trained model on LUPerson. Our best results boost the mAP performance by 3.2\% (89.6\% vs 86.4\%) and 10.8\% (50.6\% vs 39.8\%) on Market and MSMT17, respectively.

\textbf{UDA ReID.} Some latest UDA-ReID methods are compared in the Table~\ref{tab:sota3}. Among all existing state-of-the-art methods, C-Contrast achieves the best performance at 86.9\% and 42.6\% mAP on MS2MA and MA2MS, respectively. Our method with both ViT-S and ViT$_I$-S surpasses C-Contrast by a large margin. Especially, ViT$_I$-S obtains 89.9\% mAP on MS2MA and 57.8\% mAP on MA2MS, which are already comparable to many supervised methods.

\section{Conclusions and Discussions}

This paper mitigate the domain gap between pre-training and ReID datasets for transformer-based person ReID from aspects of data and model. After observing that the existing pre-training paradigms of person ReID cannot perform well for transformer-based backbones, we investigate SSL methods with ViTs on the LUPerson and find DINO is the most suitable pre-training method for transformer-based ReID. To further bridge the gap between pre-training and ReID datasets, we propose a conditional pre-training method based on CFS to select relevant pre-training data to the target domain. The proposed method can speed up pre-training without performance drop. For the model structure, a ReID-specific patch embedding called IBN-based convolution stem is proposed to improve the peak performance. We believe the promising performance will inspire more work to study the SSL pre-training for the transformer-based models towards person ReID, \ie, a more suitable ViT variant, or a ReID-specific pre-training framework, etc. 

\textbf{Limitations.} The pre-trained models are only suitable for the person ReID but cannot perform well on other less related tasks such as vehicle ReID, human parsing, or image classification, etc.

{\small
\bibliographystyle{ieee_fullname}
\bibliography{egbib}

\begin{thebibliography}{10}\itemsep=-1pt

\bibitem{anthony1999neural}
Martin Anthony, Peter~L Bartlett, and Peter~L Bartlett.
\newblock {\em Neural network learning: Theoretical foundations}, volume~9.
\newblock cambridge university press Cambridge, 1999.

\bibitem{ben2010theory}
Shai Ben-David, John Blitzer, Koby Crammer, Alex Kulesza, Fernando Pereira, and
  Jennifer~Wortman Vaughan.
\newblock A theory of learning from different domains.
\newblock {\em Machine learning}, 79(1):151--175, 2010.

\bibitem{dino}
Mathilde Caron, Hugo Touvron, Ishan Misra, Herv\'e J\'egou, Julien Mairal,
  Piotr Bojanowski, and Armand Joulin.
\newblock Emerging properties in self-supervised vision transformers.
\newblock In {\em ICCV}, 2021.

\bibitem{chakraborty2020efficient}
Shuvam Chakraborty, Burak Uzkent, Kumar Ayush, Kumar Tanmay, Evan Sheehan, and
  Stefano Ermon.
\newblock Efficient conditional pre-training for transfer learning.
\newblock {\em arXiv preprint arXiv:2011.10231}, 2020.

\bibitem{ABD-Net}
Tianlong Chen, Shaojin Ding, Jingyi Xie, Ye Yuan, Wuyang Chen, Yang Yang, Zhou
  Ren, and Zhangyang Wang.
\newblock Abd-net: Attentive but diverse person re-identification.
\newblock In {\em ICCV}, 2019.

\bibitem{moco_v2}
Xinlei Chen, Haoqi Fan, Ross Girshick, and Kaiming He.
\newblock Improved baselines with momentum contrastive learning.
\newblock {\em arXiv preprint arXiv:2003.04297}, 2020.

\bibitem{SCSN}
Xuesong Chen, Canmiao Fu, Yong Zhao, Feng Zheng, Jingkuan Song, Rongrong Ji,
  and Yi Yang.
\newblock Salience-guided cascaded suppression network for person
  re-identification.
\newblock In {\em CVPR}, pages 3300--3310, 2020.

\bibitem{chen2021mocov3}
Xinlei Chen, Saining Xie, and Kaiming He.
\newblock An empirical study of training self-supervised vision transformers.
\newblock In {\em ICCV}, pages 9640--9649, October 2021.

\bibitem{coleman2020selection}
C Coleman, C Yeh, S Mussmann, B Mirzasoleiman, P Bailis, P Liang, J Leskovec,
  and M Zaharia.
\newblock Selection via proxy: Efficient data selection for deep learning.
\newblock In {\em ICLR}, 2020.

\bibitem{cui2018large}
Yin Cui, Yang Song, Chen Sun, Andrew Howard, and Serge Belongie.
\newblock Large scale fine-grained categorization and domain-specific transfer
  learning.
\newblock In {\em CVPR}, pages 4109--4118, 2018.

\bibitem{dai2021cluster}
Zuozhuo Dai, Guangyuan Wang, Weihao Yuan, Siyu Zhu, and Ping Tan.
\newblock Cluster contrast for unsupervised person re-identification.
\newblock {\em arXiv preprint arXiv:2103.11568}, 2021.

\bibitem{deng2009imagenet}
Jia Deng, Wei Dong, Richard Socher, Li-Jia Li, Kai Li, and Li Fei-Fei.
\newblock Imagenet: A large-scale hierarchical image database.
\newblock In {\em CVPR}, pages 248--255. Ieee, 2009.

\bibitem{VIT}
Alexey Dosovitskiy, Lucas Beyer, Alexander Kolesnikov, Dirk Weissenborn,
  Xiaohua Zhai, Thomas Unterthiner, Mostafa Dehghani, Matthias Minderer, Georg
  Heigold, Sylvain Gelly, et~al.
\newblock An image is worth 16x16 words: Transformers for image recognition at
  scale.
\newblock In {\em ICLR}, 2020.

\bibitem{fu2021unsupervised}
Dengpan Fu, Dongdong Chen, Jianmin Bao, Hao Yang, Lu Yuan, Lei Zhang, Houqiang
  Li, and Dong Chen.
\newblock Unsupervised pre-training for person re-identification.
\newblock In {\em CVPR}, pages 14750--14759, 2021.

\bibitem{ge2017borrowing}
Weifeng Ge and Yizhou Yu.
\newblock Borrowing treasures from the wealthy: Deep transfer learning through
  selective joint fine-tuning.
\newblock In {\em CVPR}, pages 1086--1095, 2017.

\bibitem{MMT}
Yixiao Ge, Dapeng Chen, and Hongsheng Li.
\newblock Mutual mean-teaching: Pseudo label refinery for unsupervised domain
  adaptation on person re-identification.
\newblock In {\em ICLR}, 2020.

\bibitem{SPCL}
Yixiao Ge, Feng Zhu, Dapeng Chen, Rui Zhao, and Hongsheng Li.
\newblock Self-paced contrastive learning with hybrid memory for domain
  adaptive object re-id.
\newblock In {\em NeurIPS}, 2020.

\bibitem{BYOL}
Jean-Bastien Grill, Florian Strub, Florent Altch{\'e}, Corentin Tallec, Pierre
  Richemond, Elena Buchatskaya, Carl Doersch, Bernardo Pires, Zhaohan Guo,
  Mohammad Azar, et~al.
\newblock Bootstrap your own latent: A new approach to self-supervised
  learning.
\newblock In {\em NIPS}, 2020.

\bibitem{he2021transreid}
Shuting He, Hao Luo, Pichao Wang, Fan Wang, Hao Li, and Wei Jiang.
\newblock Transreid: Transformer-based object re-identification.
\newblock In {\em ICCV}, pages 15013--15022, October 2021.

\bibitem{hinton2015distilling}
Geoffrey Hinton, Oriol Vinyals, and Jeff Dean.
\newblock Distilling the knowledge in a neural network.
\newblock {\em arXiv preprint arXiv:1503.02531}, 2015.

\bibitem{hoeffding1963probability}
Wassily Hoeffding.
\newblock Probability inequalities for sums of bounded random variables.
\newblock {\em Journal of the American Statistical Association},
  58(301):13--30, 1963.

\bibitem{bn}
Sergey Ioffe and Christian Szegedy.
\newblock Batch normalization: Accelerating deep network training by reducing
  internal covariate shift.
\newblock In {\em ICML}, pages 448--456. PMLR, 2015.

\bibitem{jing2020self}
Longlong Jing and Yingli Tian.
\newblock Self-supervised visual feature learning with deep neural networks: A
  survey.
\newblock {\em TPAMI}, 2020.

\bibitem{kifer2004detecting}
Daniel Kifer, Shai Ben-David, and Johannes Gehrke.
\newblock Detecting change in data streams.
\newblock In {\em VLDB}, volume~4, pages 180--191. Toronto, Canada, 2004.

\bibitem{kirkpatrick2017overcoming}
James Kirkpatrick, Razvan Pascanu, Neil Rabinowitz, Joel Veness, Guillaume
  Desjardins, Andrei~A Rusu, Kieran Milan, John Quan, Tiago Ramalho, Agnieszka
  Grabska-Barwinska, et~al.
\newblock Overcoming catastrophic forgetting in neural networks.
\newblock {\em PNAS}, 114(13):3521--3526, 2017.

\bibitem{cka}
Simon Kornblith, Mohammad Norouzi, Honglak Lee, and Geoffrey Hinton.
\newblock Similarity of neural network representations revisited.
\newblock In {\em International Conference on Machine Learning}, pages
  3519--3529. PMLR, 2019.

\bibitem{PAT}
Yulin Li, Jianfeng He, Tianzhu Zhang, Xiang Liu, Yongdong Zhang, and Feng Wu.
\newblock Diverse part discovery: Occluded person re-identification with
  part-aware transformer.
\newblock In {\em CVPR}, pages 2898--2907, 2021.

\bibitem{liu2021video}
Xuehu Liu, Pingping Zhang, Chenyang Yu, Huchuan Lu, Xuesheng Qian, and Xiaoyun
  Yang.
\newblock A video is worth three views: Trigeminal transformers for video-based
  person re-identification.
\newblock {\em arXiv preprint arXiv:2104.01745}, 2021.

\bibitem{Fine}
Ziquan Liu, Yi Xu, Yuanhong Xu, Qi Qian, Hao Li, Antoni~B Chan, and Rong Jin.
\newblock Improved fine-tuning by leveraging pre-training data: Theory and
  practice.
\newblock 2021.

\bibitem{BoT}
Hao Luo, Youzhi Gu, Xingyu Liao, Shenqi Lai, and Wei Jiang.
\newblock Bag of tricks and a strong baseline for deep person
  re-identification.
\newblock In {\em CVPRW}, pages 0--0, 2019.

\bibitem{luo2019strong}
Hao Luo, Wei Jiang, Youzhi Gu, Fuxu Liu, Xingyu Liao, Shenqi Lai, and Jianyang
  Gu.
\newblock A strong baseline and batch normalization neck for deep person
  re-identification.
\newblock {\em TMM}, 22(10):2597--2609, 2019.

\bibitem{relu}
Vinod Nair and Geoffrey~E Hinton.
\newblock Rectified linear units improve restricted boltzmann machines.
\newblock In {\em ICML}, 2010.

\bibitem{ibnnet}
Xingang Pan, Ping Luo, Jianping Shi, and Xiaoou Tang.
\newblock Two at once: Enhancing learning and generalization capacities via
  ibn-net.
\newblock In {\em ECCV}, pages 464--479, 2018.

\bibitem{ramasesh2021anatomy}
Vinay~Venkatesh Ramasesh, Ethan Dyer, and Maithra Raghu.
\newblock Anatomy of catastrophic forgetting: Hidden representations and task
  semantics.
\newblock In {\em ICLR}, 2021.

\bibitem{shen2021git}
Fei Shen, Yi Xie, Jianqing Zhu, Xiaobin Zhu, and Huanqiang Zeng.
\newblock Git: Graph interactive transformer for vehicle re-identification.
\newblock {\em arXiv preprint arXiv:2107.05475}, 2021.

\bibitem{thompson2019overcoming}
Brian Thompson, Jeremy Gwinnup, Huda Khayrallah, Kevin Duh, and Philipp Koehn.
\newblock Overcoming catastrophic forgetting during domain adaptation of neural
  machine translation.
\newblock In {\em NAACL}, pages 2062--2068, 2019.

\bibitem{vapnik1971uniform}
VN Vapnik.
\newblock On the uniform convergence of relative frequencies of events to their
  probabilities.
\newblock {\em Theory of Probability and its Applications}, 16(2):264--281,
  1971.

\bibitem{vapnik2013nature}
Vladimir Vapnik.
\newblock {\em The nature of statistical learning theory}.
\newblock Springer science \& business media, 2013.

\bibitem{MGN}
Guanshuo Wang, Yufeng Yuan, Xiong Chen, Jiwei Li, and Xi Zhou.
\newblock Learning discriminative features with multiple granularities for
  person re-identification.
\newblock In {\em ACMMM}, pages 274--282, 2018.

\bibitem{wang2021scaled}
Pichao Wang, Xue Wang, Hao Luo, Jingkai Zhou, Zhipeng Zhou, Fan Wang, Hao Li,
  and Rong Jin.
\newblock Scaled relu matters for training vision transformers.
\newblock {\em arXiv preprint arXiv:2109.03810}, 2021.

\bibitem{MSMT17}
Longhui Wei, Shiliang Zhang, Wen Gao, and Qi Tian.
\newblock Person transfer gan to bridge domain gap for person
  re-identification.
\newblock In {\em CVPR}, pages 79--88, 2018.

\bibitem{conv_stem}
Tete Xiao, Piotr Dollar, Mannat Singh, Eric Mintun, Trevor Darrell, and Ross
  Girshick.
\newblock Early convolutions help transformers see better.
\newblock In {\em NeurIPS}, 2021.

\bibitem{moby}
Zhenda Xie, Yutong Lin, Zhuliang Yao, Zheng Zhang, Qi Dai, Yue Cao, and Han Hu.
\newblock Self-supervised learning with swin transformers.
\newblock {\em arXiv preprint arXiv:2105.04553}, 2021.

\bibitem{IICS}
Shiyu Xuan and Shiliang Zhang.
\newblock Intra-inter camera similarity for unsupervised person
  re-identification.
\newblock In {\em CVPR}, pages 11926--11935, 2021.

\bibitem{yan2020neural}
Xi Yan, David Acuna, and Sanja Fidler.
\newblock Neural data server: A large-scale search engine for transfer learning
  data.
\newblock In {\em Proceedings of the IEEE/CVF Conference on Computer Vision and
  Pattern Recognition}, pages 3893--3902, 2020.

\bibitem{MMCL}
Hong-Xing Yu, Wei-Shi Zheng, Ancong Wu, Xiaowei Guo, Shaogang Gong, and
  Jian-Huang Lai.
\newblock Unsupervised person re-identification by soft multilabel learning.
\newblock In {\em CVPR}, pages 2148--2157, 2019.

\bibitem{HCT}
Kaiwei Zeng, Munan Ning, Yaohua Wang, and Yang Guo.
\newblock Hierarchical clustering with hard-batch triplet loss for person
  re-identification.
\newblock In {\em CVPR}, pages 13657--13665, 2020.

\bibitem{zhang2021hat}
Guowen Zhang, Pingping Zhang, Jinqing Qi, and Huchuan Lu.
\newblock Hat: Hierarchical aggregation transformers for person
  re-identification.
\newblock {\em arXiv preprint arXiv:2107.05946}, 2021.

\bibitem{zhang2021spatiotemporal}
Tianyu Zhang, Longhui Wei, Lingxi Xie, Zijie Zhuang, Yongfei Zhang, Bo Li, and
  Qi Tian.
\newblock Spatiotemporal transformer for video-based person re-identification.
\newblock {\em arXiv preprint arXiv:2103.16469}, 2021.

\bibitem{Market1501}
Liang Zheng, Liyue Shen, Lu Tian, Shengjin Wang, Jingdong Wang, and Qi Tian.
\newblock Scalable person re-identification: A benchmark.
\newblock In {\em ICCV}, pages 1116--1124, 2015.

\bibitem{zhu2021aaformer}
Kuan Zhu, Haiyun Guo, Shiliang Zhang, Yaowei Wang, Gaopan Huang, Honglin Qiao,
  Jing Liu, Jinqiao Wang, and Ming Tang.
\newblock Aaformer: Auto-aligned transformer for person re-identification.
\newblock {\em arXiv preprint arXiv:2104.00921}, 2021.

\bibitem{DG-Net++}
Yang Zou, Xiaodong Yang, Zhiding Yu, BVK~Vijaya Kumar, and Jan Kautz.
\newblock Joint disentangling and adaptation for cross-domain person
  re-identification.
\newblock In {\em Computer Vision--ECCV 2020: 16th European Conference,
  Glasgow, UK, August 23--28, 2020, Proceedings, Part II 16}, pages 87--104.
  Springer, 2020.

\end{thebibliography}
}

\clearpage
\appendix
\section{Conditional Pre-training on ImageNet}

In this section,  our proposed conditional pre-training is further applied on ImageNet in Table~\ref{tab:apx1}. Since most of the images in ImageNet are irrelevant to person images, it is difficult to select enough data for the ReID task. Therefore, the pre-training on 50\% data is inferior to the full pre-training version, but pre-training on CFS-selected data still outperforms the random sampling strategy, which shows the effectiveness of the CFS-based selection strategy.

\begin{table}[htb]
    \begin{center}
    \begin{tabular}{c|cc|cc}
        \hline
        \multirow{2}{*}{Pre-training} & \multicolumn{2}{c|}{Market}  & \multicolumn{2}{c}{MSMT17}\\
         & mAP & R1  & mAP & R1\\
        \hline
        Full (100\%)    &84.6 &93.1 &54.8 &76.7 \\
        Random (50\%)   &81.6 &91.4 &52.5 &75.2 \\
        CFS (50\%)      &83.8 &92.8 &53.8 &76.2 \\
        \hline
    \end{tabular}
    \end{center}
    \caption{\label{tab:apx1} Conditional Pre-training on ImageNet. The backbone is vanilla ViT-S. }
\end{table}

\section{Image Examples with High CFS}
\begin{figure}[htb]
    \centering
	\includegraphics[width=0.45\textwidth]{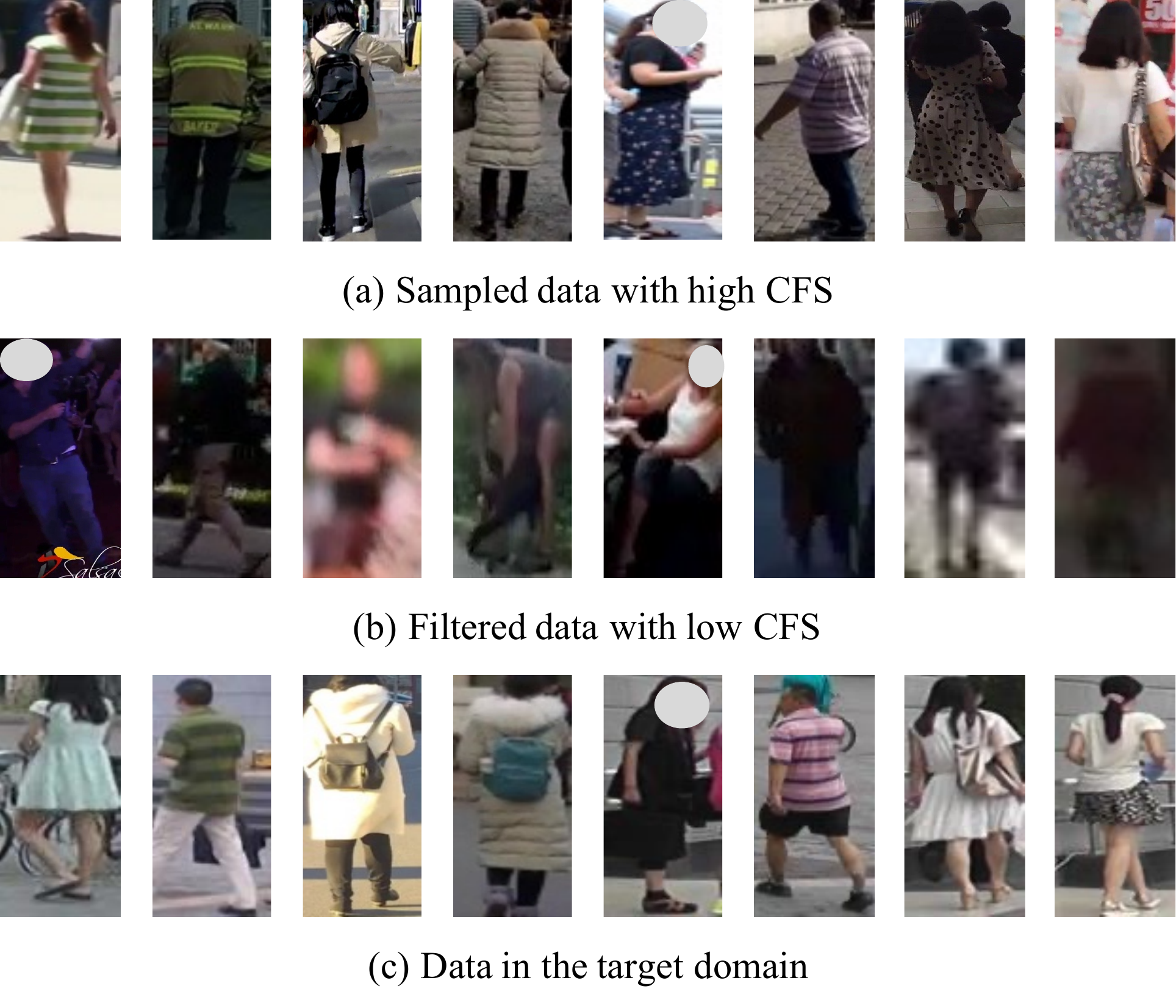}
	\caption{Some random examples of selected data with high CFS and filtered data with low CFS. Images in the first two rows are from the source domain (LUPerson). The last row shows images sampled from the target domain (Market and MSMT17). Images in each row have been rearranged for the best view.} 
	\label{fig:samples}
	\vspace{-1em}
\end{figure}

As Figure~\ref{fig:samples} shows, the sampled images with higher CFS are more similar to the data in the target domain. In contrast, the filtered images with lower CFS are of low-quality or have great domain gap with the target datasets. These filtered images cannot provide enough discriminative ID information during pre-training. Therefore, our conditional pre-training can effectively mitigate the domain gap between the pre-training and target datasets by removing those irrelevant or harmful data.

\section{Analysis of Feature Representation}

Table~\ref{tab:feat} shows an experiment on Market-1501 to analyze the feature invariance of pre-trained models. The original dataset is extended into six variations by applying six different augmentations to each image, to simulate some important appearance variances in the ReID task (examples shown in Figure \ref{fig:example}). Then, Centered Kernel Alignment (CKA)~\cite{cka} scores can evaluate the similarity between the features of the origin dataset and the features of the each simulated dataset. The higher CKA score is, the features provided by the pre-trained model show better invariance on this type of augmentation. ViT$_I$-S achieves the best CKA scores for all types of appearance variance, showing that ICS is beneficial for learning invariant features.

\begin{figure}[htb]
    \centering
	\includegraphics[width=0.45\textwidth]{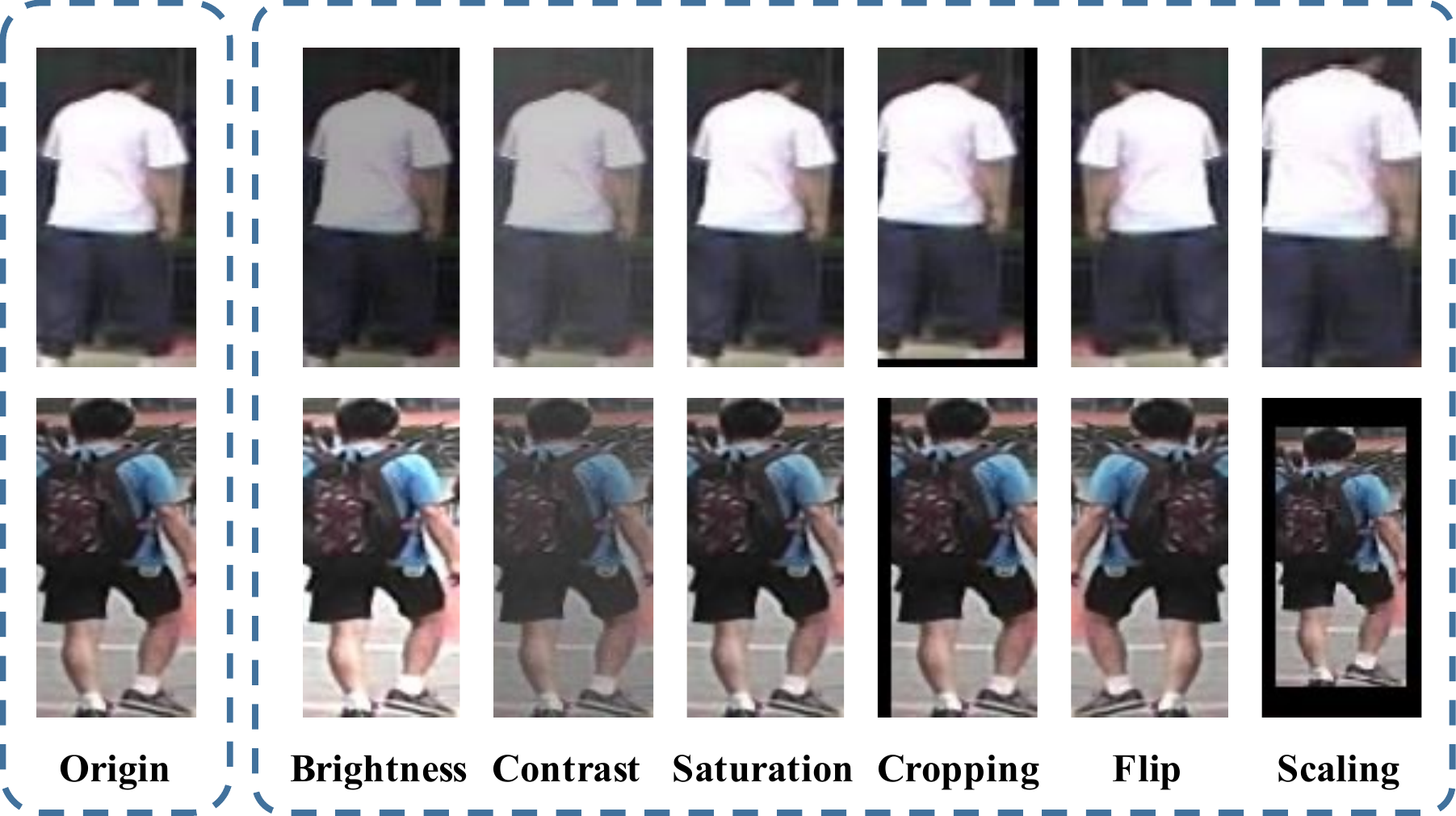}
	\caption{Examples of generated images. Six different augmentations are included, \ie brightness, contrast, saturation, cropping, flip, and scaling, which are all important for appearance variance in ReID. The first two types of augmentation are related to color information and the last four are related to texture information.} 
	\label{fig:example}
	\vspace{-1em}
\end{figure}

\begin{table}[htb]\small
    \begin{center}\setlength\tabcolsep{5pt}
    \begin{tabular}{c|cccccc}
    \hline
    Pre-train & Brig & Cont & Sat & Crop & Flip & Scale \\
    \hline
    ViT$_I$-S &\textbf{0.809} &\textbf{0.818} &\textbf{0.998} &\textbf{0.927} &\textbf{0.996} &\textbf{0.820} \\
    ViT$_C$-S & 0.727 & 0.691 & 0.997& 0.919 & 0.996 & 0.794 \\
    ViT-S   &0.702 &0.664 & 0.997 &0.877 &0.996 & 0.728\\
    \hline
    \end{tabular}\end{center}
    \vspace{-1em}
    \caption{\label{tab:feat} Feature invariance of different patch embedding methods. 
     The table compares the CKA scores of different pre-trained models for  different augmentations. Brig, Cont, Sat, Crop, Flip, and Scale represent augmentation of brightness, contrast, saturation, cropping, flip, and scaling, respectively.} 
     \vspace{-1.5em}
\end{table}

\section{Improvements of LUPerson Pre-training}
\vspace{-0.5em}
\renewcommand{\multirowsetup}{\centering}
\begin{table}[htb]\small
    \begin{center}
    \setlength\tabcolsep{5pt}
    \begin{tabular}{ccc|cc|cc}
    \hline
     \multicolumn{3}{c|}{Pre-training}  & \multicolumn{2}{c|}{Market}  & \multicolumn{2}{c}{MSMT17}\\
    Models &Data & $\#$Images & mAP & R1 & mAP & R1\\
    \hline
    \multirow{2}{*}{ViT-S}& IMG-1K &1.28M &85.0 &93.8 &53.5 &75.2\\
                          &LUP & 4.18M &90.3 &95.4 &64.2 &83.4\\
    \hline
    \multirow{2}{*}{ViT-B}& IMG-21K &14.2M &86.8 &94.7 &61.0 &81.8\\
                          &LUP & 4.18M &92.5 &96.5 &70.1 &87.3\\
    \hline
    \end{tabular}
    \end{center}
    \vspace{-1em}
    \caption{\label{tab:ssl2} SSL Pre-training on LUPerson can consistently improve the performance of supervised pre-training on ImageNet-1k or ImageNet-21k. Conditional pre-training or ICS is not included.}
    \vspace{-2em}
\end{table}

\onecolumn
\begin{algorithm*}
\caption{Source codes of Patch Embedding for \textit{IBN-based Convolution Stem}.}\label{ics}
\lstinputlisting[language=Python]{ics.py}
\end{algorithm*}

\section{Theoretical Analysis of Generalization Performance} 
In this section, we give a theoretical analysis of generalization performance relating source and target data. In particular, we consider excess risk bound (ERB) as a measure of generalization performance, which is commonly used in the learning theory literature~\cite{vapnik2013nature}. The following informal theorem highlights the factors that influence the generalization of target task, which dramatically simplify Theorem~\ref{thm:main}.
\begin{thm}[Informal Version of Theorem~\ref{thm:main}]\label{thm:main:informal} 
Under the condition that the sample size of source data is larger than that of target data, then we have the following tow propositions to help improve generalization performance of target task:
\begin{itemize}
    \item use the examples from the source data that close to target data;
    \item use a small learning rate to slightly update the backbone during the fine-tuning process.
\end{itemize}
\end{thm}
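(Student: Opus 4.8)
The statement is an informal version of a formal result (Theorem~\ref{thm:main}), so my job is to identify the precise learning-theoretic setup under which the two qualitative claims — (i) using source examples close to the target, and (ii) taking a small fine-tuning learning rate — provably shrink an excess risk bound (ERB) for the target task. I would set up the standard excess-risk decomposition: for the fine-tuned predictor $\hat f$ and the best-in-class $f^\star$ on the target distribution, write
\begin{align}\label{eqn:erb:decomp}
\mathrm{ERB} = \underbrace{\big(\Lmath_t(\hat f) - \hat\Lmath_t(\hat f)\big)}_{\text{generalization gap}} + \underbrace{\big(\hat\Lmath_t(\hat f) - \hat\Lmath_t(f^\star)\big)}_{\le 0 \text{ by optimization}} + \underbrace{\big(\hat\Lmath_t(f^\star) - \Lmath_t(f^\star)\big)}_{\text{concentration}},
\end{align}
where $\Lmath_t$ is the target population risk and $\hat\Lmath_t$ its empirical counterpart on the $M$ target samples. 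The generalization gap is controlled by a complexity term for the hypothesis class reachable by fine-tuning from the pretrained backbone $\theta_s$; since fine-tuning with a small learning rate over a bounded number of steps keeps the parameters in a small ball around $\theta_s$, the effective class has small Rademacher complexity (or covering number), which is where claim (ii) enters. The key object is a radius $r$ proportional to $\eta \cdot T \cdot G$ (learning rate times steps times gradient bound); plugging $r$ into a Lipschitz/margin bound yields a term like $O(rL/\sqrt{M})$ that is monotone increasing in $\eta$, giving (ii).

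**Injecting the source-data contribution.** For claim (i), I would argue that the quality of the initialization $\theta_s$ — equivalently, how good $f^\star$ restricted to the small ball around $\theta_s$ can be — depends on how well the source-pretrained representation already captures target-relevant structure. Concretely, I would bound the approximation error $\Lmath_t(f^\star_{\text{ball}(\theta_s,r)}) - \Lmath_t(f^\star_{\text{global}})$ by a quantity measuring the discrepancy between the source and target distributions restricted to the selected source subset $\D_s'$; if $\D_s'$ consists of examples close to the target (in the sense of Assumption~\ref{ass:similarity} / the CFS criterion of Theorem~\ref{main:CFS}), this discrepancy is small, so a near-optimal target predictor already lies in the small ball, making the small-$\eta$ regime lossless for bias while still winning on variance. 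The source sample size entering here is $N'$; the hypothesis ``$N' > M$'' (or $N > M$) is what guarantees the pretrained backbone is estimated accurately enough that the initialization error is dominated by the distribution-shift term rather than by source estimation error — this is exactly the role of the stated condition ``sample size of source data is larger than that of target data.''

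**Assembling and the main obstacle.** Combining the two pieces: $\mathrm{ERB} \lesssim \mathrm{dist}(\D_s', \D_t) + \eta T G L/\sqrt{M} + O(1/\sqrt{M})$, where the first term shrinks when we select close source data and the second shrinks when $\eta$ is small — yielding both bullet points. The main obstacle, I expect, is making the ``fine-tuning stays in a ball of radius $\propto \eta$'' claim rigorous without heavy assumptions: in full generality SGD trajectories need not stay local, so one needs either a smoothness/bounded-gradient assumption on the loss along the trajectory, or to restrict to the last-layer / linear-probing-plus-small-perturbation regime, or a PL-type condition. A secondary subtlety is quantifying $\mathrm{dist}(\D_s',\D_t)$ in a way that is simultaneously (a) controlled by the CFS-based selection and (b) tight enough to feed into the approximation-error bound; bridging the geometric ``$(\epsilon,\theta)$-similar'' notion of Section~\ref{subsec:math:cfs} with a genuine risk-level discrepancy (e.g.\ via a Lipschitz assumption on the loss as a function of the representation) is where most of the real work lies. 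I would handle this by stating the smoothness and Lipschitz assumptions explicitly as hypotheses of the formal Theorem~\ref{thm:main}, then the rest is the routine decomposition above.
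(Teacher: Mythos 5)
Your plan is a legitimate way to arrive at the two bullets, but it is genuinely different from what the paper does, and the difference is worth spelling out. The paper proves the formal Theorem~\ref{thm:main} entirely inside the classical $\mathcal H\Delta\mathcal H$-divergence framework of Ben-David et al.: it applies the triangle inequality for the disagreement pseudo-metric $\F(h,h')=\E_{\x\sim\D}[|h(\x)-h'(\x)|]$ (Lemma~\ref{lem:triangle}) to transfer the target excess risk to source-domain quantities, pays the price $\tfrac{1}{2}d_{\mathcal H\Delta\mathcal H}(\D_s,\D_t)$ via the sup over hypothesis pairs, and then passes to empirical quantities using a VC-type concentration bound for the divergence (Lemma~\ref{lem:concentration}, where the condition $m\gg n$ enters so that the rate is governed by the target sample size and the source-side Hoeffding term $\sqrt{\log(4/\delta)/(2m)}$ is negligible) plus Hoeffding's inequality. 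The two bullets are then \emph{read off} the resulting bound: the first corresponds to the term $\widehat d_{\mathcal H\Delta\mathcal H}(\mathcal U_s,\mathcal U_t)$, and the second is only an interpretation of the term $\widehat\F_t(h,\widetilde h_*)$, i.e., the requirement that the fine-tuned hypothesis stay close in disagreement distance to the source-optimal one --- no learning rate, step count, or parameter ball appears anywhere in the formal statement or proof. Your route instead uses the standard three-term ERM decomposition with a localized (parameter-ball) Rademacher complexity controlled by $\eta T G$, and injects source--target closeness through the approximation error of the restricted ball. What your approach buys is an explicit, quantitative dependence on the learning rate, making the second bullet a genuine theorem rather than a gloss; what it costs is exactly the obstacle you identify --- smoothness/bounded-gradient/locality assumptions needed to keep the SGD trajectory in a ball, which the paper avoids entirely by never leaving hypothesis space. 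Conversely, the paper's bound is rigorous under only i.i.d. sampling and finite VC dimension, but its support for the ``small learning rate'' bullet is heuristic, and its closeness measure is the $\mathcal H\Delta\mathcal H$-divergence between \emph{samples} rather than the geometric $(\epsilon,\theta)$-similarity of Section~\ref{subsec:math:cfs}, so the bridge to the CFS criterion that you worry about is left informal in the paper as well.
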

In the remaining of this section, we first introduce the problem setting, and then present the formal theoretical result with its proof. To simplify the presentation of the proof, we include some supporting lemmas at the end of this section.
\subsection{Problem Setting}
To give an insight of the proposed method from the view of theoretical side, we formalize the problem as follows. We consider binary classification for simplicity. A domain is defined as a pair consisting of a distribution $\D$ on inputs $\X$ and a labeling function $f: \X \rightarrow \Y$ with $\Y := \{0, 1\}$. The labeling function can have a fractional (expected) value when labeling occurs non-deterministically. We denote by $\langle\D_s, f_s\rangle$ the source domain and $\langle\D_t, f_t\rangle$ the target domain. A hypothesis is a function $h\in\mathcal H: \X \rightarrow \Y$, where $\mathcal H$ is a hypothesis space on $\X$ with VC dimension~\cite{vapnik1971uniform} $d$. The loss on source domain that a hypothesis $h$ disagrees with a labeling function $f$ is defined as
\begin{align}
     \F_s(h,f) := \E_{\x\sim\D_s}\left[|h(\x)-f(\x)| \right],
\end{align}
where $f$ can also be a hypothesis. Its empirical version is written as
\begin{align}
     \widehat\F_s(h) := \frac{1}{|\mathcal U_s|}\sum_{\x\in\mathcal U_s}|h(\x)-f(\x)|,
\end{align}
where $\mathcal U_s$ is sampled from $\D_s$ and $|\mathcal U_s|$ is the sample size.
Similarly, for target domain we define
\begin{align}
     \F_t(h,f) := \E_{\x\sim\D_t}\left[|h(\x)-f(\x)| \right],\\
     \widehat\F_t(h) := \frac{1}{|\mathcal U_t|}\sum_{\x\in\mathcal U_t}|h(\x)-f(\x)|,
\end{align}
where $\mathcal U_t$ is sampled from $\D_t$ and $|\mathcal U_t|$ is the sample size..
Since the label function is deterministic for target task, the risk of a hypothesis is defined as
\begin{align}
     \F_t(h) := \E_{\x\sim\D_t}\left[|h(\x)-f_t(\x)| \right],
\end{align}
and the empirical risk is 
\begin{align}
     \widehat\F_t(h) := \frac{1}{|\mathcal U_t|}\sum_{\x\in\mathcal U_t}|h(\x)-f_t(\x)|.
\end{align}
Let define 
\begin{align*}
    h_* := &\arg\min_{h\in\mathcal H}\F_t(h),\\
    \widetilde h_*,\widetilde f_* := &\arg\min_{h\in\mathcal H, f\in\mathcal H}\F_s(h,f),\\
    \F_* := & \F_t(h_*) + \F_s(\widetilde h_*,\widetilde f_*).
\end{align*}
The excess risk is defined by
\begin{align}
    \F_t(h) - \F_t(h_*),
\end{align}
which can be considered as a measure of generalization performance. Our goal is to minimize the excess risk.

For a hypothesis space $\mathcal H$, we give the definitions of the symmetric difference hypothesis space $\mathcal H\Delta\mathcal H$ and its divergence following by~\cite{ben2010theory}.
\begin{definition}\cite{ben2010theory}\label{def:sym:hyp}
For a hypothesis space $H$, the symmetric difference hypothesis space $\mathcal H\Delta\mathcal H$ is the set of hypotheses
$g\in \mathcal H\Delta\mathcal H  \Leftrightarrow g(\x)=h(\x)\oplus h'(\x)$ forsome $h,h'\in\mathcal H$,
where $\oplus$ is the XOR function. That is, every hypothesis $g\in \mathcal H\Delta\mathcal H$  is the set of disagreements between two hypotheses in $\mathcal H$.
\end{definition}
\begin{definition}\cite{ben2010theory}\label{def:sym:hyp:divergence}
 For a hypothesis space $\mathcal H$, the $\mathcal H\Delta\mathcal H$-distance between two distributions $\D_1$ and $\D_2$ is defined as
\begin{align*}
    d_{\mathcal H \Delta \mathcal H}(\D_1,\D_2)  = \sup_{h,h'\in\mathcal H}\left|\text{Pr}_{\x\sim\D_1}(h(\x)\neq h'(\x)) - \text{Pr}_{\x\sim\D_2}(h(\x)\neq h'(\x)) \right|.
\end{align*}
\end{definition}
{\bf Remark.} The empirical version of $\mathcal H\Delta\mathcal H$-distance, denoted by $\widehat d_{\mathcal H \Delta \mathcal H}(\mathcal U_s,\mathcal U_t)$, can be considered as the a measure of similarity between the sampled source data $\mathcal U_s$ and the sampled target data $\mathcal U_t$.
\subsection{Main Result and its Proof}
\begin{thm}[Formal Version]\label{thm:main}
Let $\mathcal U_s$, $\mathcal U_t$ finite samples of sizes $m$, $n$ ($m\gg n$) drawn i.i.d. according $\D_s$, $\D_t$ respectively. $\widehat d_{\mathcal H \Delta \mathcal H}(\mathcal U_s,\mathcal U_t)$ is the empirical $\mathcal H \Delta \mathcal H$-divergence between samples, then for any $\delta\in(0,1)$, with probability at least $1-\delta$, we have
\begin{align}\label{bound:thm:main}
    \nonumber\F_t(h) - \F_t(h_*)\le &     \frac{3}{2}\widehat d_{\mathcal H \Delta \mathcal H}(\mathcal U_s,\mathcal U_t)  + \widehat\F_t(h,\widetilde h_*)  + \F_t(h_*,\widetilde f_*) + \F_s(\widetilde h_*,\widetilde f_*)  \\
    &+ \sqrt{\frac{\log(8/\delta)}{2n}}+12\sqrt{\frac{2d\log(2n)+\log(8/\delta)}{n}}.
\end{align}
\end{thm}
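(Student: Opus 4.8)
\textbf{Plan of proof for Theorem~\ref{thm:main}.} The strategy is a standard domain-adaptation excess-risk decomposition in the style of Ben-David et al.~\cite{ben2010theory}, adapted to the fine-tuning situation where we have a large source sample and a small target sample. The plan is to bound $\F_t(h) - \F_t(h_*)$ by inserting and cancelling terms, routing through the source-domain risk, and then invoking a uniform-convergence bound on the target sample together with the empirical $\mathcal H\Delta\mathcal H$-divergence $\widehat d_{\mathcal H\Delta\mathcal H}(\mathcal U_s,\mathcal U_t)$ to control the gap between source and target populations. Since $m\gg n$, all deviation terms coming from the source sample are of lower order and will be absorbed; only the $O(\sqrt{d\log n/n})$ terms from the target sample survive explicitly in \eqref{bound:thm:main}.

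\textbf{Key steps, in order.} First I would write, using the triangle-type inequality for the $\F_t(\cdot,\cdot)$ pseudometric (which holds because $|\cdot|$ satisfies the triangle inequality pointwise and expectation is monotone), the chain
\[
\F_t(h) - \F_t(h_*) \le \F_t(h,\widetilde h_*) + \F_t(\widetilde h_*, h_*) - \F_t(h_*) \le \F_t(h,\widetilde h_*) + \F_t(\widetilde h_*,\widetilde f_*) + \F_t(\widetilde f_*, h_*),
\]
so that the error is decomposed into a term comparing $h$ to the source-optimal hypothesis $\widetilde h_*$, a term $\F_t(\widetilde h_*,\widetilde f_*)$ measuring how well the source-optimal pair transfers to the target domain, and the residual $\F_t(h_*,\widetilde f_*)$. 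Second, for the term $\F_t(h,\widetilde h_*)$ I would pass from the target population to the target sample, picking up $\widehat\F_t(h,\widetilde h_*)$ plus a uniform-convergence penalty over $\mathcal H\Delta\mathcal H$; this is where the $\sqrt{(2d\log(2n)+\log(8/\delta))/n}$ and $\sqrt{\log(8/\delta)/(2n)}$ contributions enter, via a VC-dimension bound (the factor $12$ and the $\sqrt 2$ are the usual constants from the symmetric-difference class, whose VC dimension is controlled by $d$). Third, for the transfer term $\F_t(\widetilde h_*,\widetilde f_*)$, I would bound it by $\F_s(\widetilde h_*,\widetilde f_*)$ plus the population $\mathcal H\Delta\mathcal H$-distance $d_{\mathcal H\Delta\mathcal H}(\D_s,\D_t)$ (since $\widetilde h_*,\widetilde f_*\in\mathcal H$, their disagreement probability differs by at most that distance), and then replace the population divergence by $\widehat d_{\mathcal H\Delta\mathcal H}(\mathcal U_s,\mathcal U_t)$ at the cost of another VC deviation term — again lower order once combined with the $m\gg n$ assumption, which lets me collapse coefficients into the stated $\tfrac32$ in front of $\widehat d_{\mathcal H\Delta\mathcal H}$. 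Finally I would collect $\F_s(\widetilde h_*,\widetilde f_*)$ and $\F_t(h_*,\widetilde f_*)$ as the irreducible ``domain compatibility'' terms and union-bound the (at most four) failure events to get the overall probability $1-\delta$, which explains the $8/\delta$ inside the logarithms.

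\textbf{Main obstacle.} The delicate point is the bookkeeping of constants and the use of $m\gg n$: each time I move between a population quantity and its empirical counterpart on the source side I incur a deviation of order $\sqrt{d\log m/m}$, and I must argue that these are dominated by the target-side $\sqrt{d\log n/n}$ terms (or simply bounded by them), so that they can be merged rather than appearing separately — this is what produces the clean coefficient $\tfrac32$ on $\widehat d_{\mathcal H\Delta\mathcal H}$ instead of a sum of several divergence terms with their own sampling errors. A secondary subtlety is justifying that the triangle inequality genuinely applies to $\F_t(\cdot,\cdot)$ and that $\widetilde f_*$, although only a proof device, may legitimately be treated as an element of $\mathcal H$ in all the $\mathcal H\Delta\mathcal H$ arguments; both follow from the definitions but should be stated explicitly. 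I would package the two uniform-convergence facts (VC bound for $\widehat\F_t$ over $\mathcal H\Delta\mathcal H$, and for $\widehat d_{\mathcal H\Delta\mathcal H}$) as supporting lemmas, as the excerpt already signals, so the main proof is just the decomposition plus substitution.
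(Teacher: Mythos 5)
Your proposal is correct but follows a genuinely different (and in fact tighter) route than the paper. The paper first bounds $\F_t(h)-\F_t(h_*)\le \F_t(h,h_*)$, transfers this \emph{whole} comparison to the source domain at the cost of $\tfrac12 d_{\mathcal H\Delta\mathcal H}(\D_s,\D_t)$, decomposes there into $\F_s(h,\widetilde h_*)+\F_s(h_*,\widetilde f_*)+\F_s(\widetilde h_*,\widetilde f_*)$, and then ("similarly") transfers two of these terms back to the target domain, each costing another $\tfrac12 d_{\mathcal H\Delta\mathcal H}$ — that is the actual origin of the $\tfrac32$ coefficient, and the three applications of the VC concentration lemma for the divergence are what produce the $12\sqrt{(2d\log(2n)+\log(8/\delta))/n}$ term (the $\sqrt{\log(8/\delta)/(2n)}$ comes from a single Hoeffding bound for $\F_t(h,\widetilde h_*)\to\widehat\F_t(h,\widetilde h_*)$). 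You instead decompose directly in the target domain, $\F_t(h)-\F_t(h_*)\le \F_t(h,\widetilde h_*)+\F_t(\widetilde h_*,\widetilde f_*)+\F_t(h_*,\widetilde f_*)$, and only need to move the single term $\F_t(\widetilde h_*,\widetilde f_*)$ to the source domain; this costs $\tfrac12\widehat d_{\mathcal H\Delta\mathcal H}(\mathcal U_s,\mathcal U_t)$ and one concentration event rather than three, so you would prove a strictly stronger bound whose constants are dominated by those in \eqref{bound:thm:main} — which suffices. Two points to fix in the write-up: the stray ``$-\F_t(h_*)$'' in your first chain is a typo (the correct first step is $\F_t(h)-\F_t(h_*)\le\F_t(h,h_*)$ via Lemma~\ref{lem:triangle} applied with $f_t$), and your story for where $\tfrac32$ and the factor $12$ come from (collapsing source-side $\sqrt{d\log m/m}$ deviations using $m\gg n$) is not how those constants arise and is not needed on your route; you should simply state that your coefficients $\tfrac12$ and $4$ are bounded by the stated $\tfrac32$ and $12$.
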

{\bf Remark.} We make several comments of the ERB in Theorem~\ref{thm:main}. First, the bound shows that the unlabeled empirical $\mathcal H\Delta\mathcal H$-divergence (i.e., $\widehat d_{\mathcal H \Delta \mathcal H}(\mathcal U_s,\mathcal U_t)$, the first term of the left hand side of bound (\ref{bound:thm:main})) is important to improve the generalization performance (reduce the excess risk), indicating that when the used source data and target data are close, it will have a good generalization performance in target task. Second, the bound needs the term $\widehat\F_t(h,\widetilde h_*)$ is small for a hypothesis $h$ and the optimal hypothesis $\widetilde h_*$ of source domain, which means that $h$ should be not too far from $\widetilde h_*$. This is consistent with the practice that usually a small learning rate is used to slightly update the backbone during the fine-tuning process. Finally, the term $\F_s(\widetilde h_*,\widetilde f_*)$ in the bound tells us that the source error obtained during the pre-training process is also important to the target performance. 
\begin{proof}[Proof of Theorem~\ref{thm:main}]
By Lemma~\ref{lem:triangle} we have
    \begin{align*}
    \F_t(h) - \F_t(h_*) \le &  \F_t(h,h_*) \\
    \le & \F_s(h,h_*) + \left|\F_t(h,h_*) - \F_s(h,h_*)\right| \\
    \le & \F_s(h,h_*) + \sup_{h,h'\in\mathcal H}\left|\F_t(h,h') - \F_s(h,h')\right| \\
    = & \F_s(h,h_*) + \sup_{h,h'\in\mathcal H}\left|\text{Pr}_{\x\sim\D_t}(h(\x)\neq h'(\x)) - \text{Pr}_{\x\sim\D_s}(h(\x)\neq h'(\x)) \right| \\
    = &  \F_s(h,h_*) + \frac{1}{2}d_{\mathcal H \Delta \mathcal H}(\D_s,\D_t) \quad\text{ (by Definition~\ref{def:sym:hyp})}\\
    \le &  \F_s(h_*,\widetilde f_*)  + \F_s(h,\widetilde h_*)  + \F_s(\widetilde h_*,\widetilde f_*) +  \frac{1}{2}d_{\mathcal H \Delta \mathcal H}(\D_s,\D_t) \\
    \overset{(a)}{\le} &  \F_s(h_*,\widetilde f_*)  + \F_s(h,\widetilde h_*)  + \F_s(\widetilde h_*,\widetilde f_*) +  \frac{1}{2}\widehat d_{\mathcal H \Delta \mathcal H}(\mathcal U_s,\mathcal U_t) + 4\sqrt{\frac{2d\log(2n)+\log(2/\delta)}{n}}\\
    \overset{(b)}{\le} &  \F_s(h_*,\widetilde f_*)  + \widehat\F_s(h,\widetilde h_*)  + \sqrt{\frac{\log(4/\delta)}{2m}} + \F_s(\widetilde h_*,\widetilde f_*) \\
    &+  \frac{1}{2}\widehat d_{\mathcal H \Delta \mathcal H}(\mathcal U_s,\mathcal U_t) + 4\sqrt{\frac{2d\log(2n)+\log(4/\delta)}{n}}.
\end{align*}
Here (a) uses Lemma~\ref{lem:concentration} with the fact that since every $g\in \mathcal H\Delta\mathcal H$ can be represented as a linear threshold network of depth 2 with 2 hidden units, the VC dimension of $ \mathcal H\Delta\mathcal H$ is at most $2d$~\cite{anthony1999neural,ben2010theory}; (b) uses Hoeffding's inequality in Lemma~\ref{lem:hoeffding}. Similarly, we have
\begin{align*}
    \F_t(h) - \F_t(h_*)\le & \F_t(h_*,\widetilde f_*)  + \F_t(h,\widetilde h_*)  + \F_s(\widetilde h_*,\widetilde f_*)  +   \frac{3}{2}\widehat d_{\mathcal H \Delta \mathcal H}(\mathcal U_s,\mathcal U_t) + 12\sqrt{\frac{2d\log(2n)+\log(8/\delta)}{n}}\\
    \le & \F_t(h_*,\widetilde f_*)  + \widehat\F_t(h,\widetilde h_*)  + \F_s(\widetilde h_*,\widetilde f_*)  +   \frac{3}{2}\widehat d_{\mathcal H \Delta \mathcal H}(\mathcal U_s,\mathcal U_t) \\
    &+ \sqrt{\frac{\log(8/\delta)}{2n}}+12\sqrt{\frac{2d\log(2n)+\log(8/\delta)}{n}}.
\end{align*}
\end{proof}

\subsection{Supporting Lemmas}
\begin{lemma}\label{lem:triangle}
Define $\F(h,f) := \E_{\x\sim\D}[|h(\x)-f(\x)|]$. For any $h_1,h_2,h_3\in\mathcal H$, we have $\F(h_1,h_2) = \F(h_2,h_1)$ and
\begin{align}
    \F(h_1,h_2) \le \F(h_1,h_3) + \F(h_2,h_3).
\end{align}
\end{lemma}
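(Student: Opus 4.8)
The plan is to derive both assertions directly from elementary properties of the absolute value on $\R$, transported through the expectation operator $\E_{\x\sim\D}$ by linearity and monotonicity. Throughout I would note that the hypotheses $h_1,h_2,h_3\in\mathcal H$ are $\Y$-valued with $\Y=\{0,1\}$, hence bounded, so every expectation appearing below is finite and all the manipulations are well defined.

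For the symmetry claim, I would first record the pointwise identity $|h_1(\x)-h_2(\x)|=|h_2(\x)-h_1(\x)|$, which holds for every $\x\in\X$ because $|{-a}|=|a|$ on $\R$. Taking $\E_{\x\sim\D}$ of both sides of this equality of integrands yields $\F(h_1,h_2)=\F(h_2,h_1)$ straight from the definition $\F(h,f)=\E_{\x\sim\D}[|h(\x)-f(\x)|]$.

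For the triangle inequality, the key step is the pointwise bound obtained from the ordinary triangle inequality on $\R$: for each $\x\in\X$,
\begin{align*}
    |h_1(\x)-h_2(\x)|\le |h_1(\x)-h_3(\x)|+|h_3(\x)-h_2(\x)|.
\end{align*}
Applying $\E_{\x\sim\D}$ to both sides, monotonicity of expectation preserves the inequality and linearity splits the right-hand side, giving
\begin{align*}
    \F(h_1,h_2)\le \E_{\x\sim\D}[|h_1(\x)-h_3(\x)|]+\E_{\x\sim\D}[|h_3(\x)-h_2(\x)|]=\F(h_1,h_3)+\F(h_3,h_2).
\end{align*}
Finally, invoking the symmetry statement just proved (applied to the pair $h_3,h_2$) rewrites $\F(h_3,h_2)$ as $\F(h_2,h_3)$, which is the desired conclusion.

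Honestly, there is no real obstacle in this lemma: both parts are immediate once the corresponding facts about $|\cdot|$ on $\R$ are established pointwise and then integrated. The only points worth a sentence of care are that all the expectations involved are finite (because the $h_i$ take values in the bounded set $\{0,1\}$), and that the triangle inequality must be applied at the level of the integrands, before taking the expectation, rather than afterwards.
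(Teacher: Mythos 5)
Your proof is correct and follows essentially the same route as the paper's: apply the pointwise triangle inequality for $|\cdot|$ on $\R$ and then pass through the expectation by monotonicity and linearity, with symmetry following from $|a-b|=|b-a|$. Your added remarks on finiteness (since the $h_i$ are $\{0,1\}$-valued) and the explicit final use of symmetry are fine but not substantively different from the paper's argument.
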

\begin{proof}\renewcommand{\qedsymbol}{}
 By the definition of $\F(h,f)$, we know this is symmetric. On the other hand, we have
 \begin{align*}
     \F(h_1,h_2) = & \E_{\x\sim\D}[|h_1(\x)-h_2(\x)|] \\
     \le& \E_{\x\sim\D}[|h_1(\x)-h_3(\x)|+ |h_2(\x)-h_3(\x)|]\\
     \le& \E_{\x\sim\D}[|h_1(\x)-h_3(\x)|] + \E_{\x\sim\D}[|h_2(\x)-h_3(\x)|]\\
     =& \F(h_1,h_3) + \F(h_2,h_3).
 \end{align*}
\end{proof}
\begin{lemma}\label{lem:concentration}
    Let $\mathcal H$ be a hypothesis space on $\X$ with VC dimension $d$. $\D_1$ and $\D_2$ be two distribution on $\X$ and $\mathcal U_1$, $\mathcal U_2$ finite samples of sizes $n_1$, $n_2$ ($n_1\gg n_2$) drawn i.i.d. according $\D_1$, $\D_2$ respectively. $\widehat d_{\mathcal H \Delta \mathcal H}(\mathcal U_1,\mathcal U_2)$ is the empirical H-divergence between samples, then for any $\delta\in(0,1)$, with probability at least $1-\delta$, we have
    \begin{align}\label{lem:ineq:concentration}
        d_{\mathcal H \Delta \mathcal H}(\D_1,\D_2) \le \widehat d_{\mathcal H \Delta \mathcal H}(\mathcal U_1,\mathcal U_2) + 4\sqrt{\frac{d\log(2n_2)+\log(2/\delta)}{n_2}}.
    \end{align}
\end{lemma}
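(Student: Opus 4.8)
The plan is to reproduce the standard two-sample uniform-convergence argument for domain divergences (this statement is essentially the $\mathcal H\Delta\mathcal H$-divergence concentration lemma of~\cite{ben2010theory}): reformulate the population divergence as a supremum of a mean discrepancy over the binary function class $\mathcal G:=\mathcal H\Delta\mathcal H$, split that discrepancy through the two empirical measures by the triangle inequality, bound the two resulting uniform generalization gaps by a VC (Vapnik--Chervonenkis) inequality, and discard the gap belonging to the large sample $\mathcal U_1$ since $n_1\gg n_2$.

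First I would pass to functional form. By Definition~\ref{def:sym:hyp}, each $g\in\mathcal G$ is the $\{0,1\}$-indicator of the disagreement region of some pair $h,h'\in\mathcal H$, so $\text{Pr}_{\x\sim\D}(h(\x)\neq h'(\x))=\E_{\x\sim\D}[g(\x)]$, and Definition~\ref{def:sym:hyp:divergence} reads $d_{\mathcal H\Delta\mathcal H}(\D_1,\D_2)=\sup_{g\in\mathcal G}\bigl|\E_{\D_1}[g]-\E_{\D_2}[g]\bigr|$, with empirical counterpart $\widehat d_{\mathcal H\Delta\mathcal H}(\mathcal U_1,\mathcal U_2)=\sup_{g\in\mathcal G}\bigl|\widehat{\E}_{\mathcal U_1}[g]-\widehat{\E}_{\mathcal U_2}[g]\bigr|$, where $\widehat{\E}_{\mathcal U}[g]$ is the sample mean of $g$ over $\mathcal U$. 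For fixed $g$ the triangle inequality gives $|\E_{\D_1}[g]-\E_{\D_2}[g]|\le|\E_{\D_1}[g]-\widehat{\E}_{\mathcal U_1}[g]|+|\widehat{\E}_{\mathcal U_1}[g]-\widehat{\E}_{\mathcal U_2}[g]|+|\widehat{\E}_{\mathcal U_2}[g]-\E_{\D_2}[g]|$; taking $\sup_{g\in\mathcal G}$ and using subadditivity of the supremum yields $d_{\mathcal H\Delta\mathcal H}(\D_1,\D_2)\le\Delta_1+\widehat d_{\mathcal H\Delta\mathcal H}(\mathcal U_1,\mathcal U_2)+\Delta_2$, where $\Delta_i:=\sup_{g\in\mathcal G}|\E_{\D_i}[g]-\widehat{\E}_{\mathcal U_i}[g]|$ is the uniform empirical-process deviation of $\mathcal G$ on the sample $\mathcal U_i$.

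It then remains to bound each $\Delta_i$ by the classical VC uniform-convergence inequality --- symmetrization plus the growth-function (Sauer) bound for the binary class $\mathcal G$, whose VC dimension is finite and at most $2d$ (absorbed into the ``$d$'' of the statement) --- which gives, with probability at least $1-\delta$ over the draw of $\mathcal U_i$, $\Delta_i\le 4\sqrt{(d\log(2n_i)+\log(2/\delta))/n_i}$, the constant $4$ and the logarithmic term being those of the standard form of this inequality. Instantiating at $i=2$ produces the displayed error term; at $i=1$ it produces a term of order $\sqrt{(d\log n_1)/n_1}$ which, under the hypothesis $n_1\gg n_2$, is dominated by the $i=2$ term and may be absorbed into it. Combining the three pieces gives $d_{\mathcal H\Delta\mathcal H}(\D_1,\D_2)\le\widehat d_{\mathcal H\Delta\mathcal H}(\mathcal U_1,\mathcal U_2)+4\sqrt{(d\log(2n_2)+\log(2/\delta))/n_2}$ with probability at least $1-\delta$.

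All the ingredients --- the functional reformulation of the divergence, the triangle inequality through the empirical measures, and VC uniform convergence --- are entirely standard, so the only genuine care needed is in (i) quoting the VC inequality in the precise constant form that yields the coefficient $4$ and the $\log(2n_2)$ under the root, and (ii) handling the ``$n_1\gg n_2$'' reduction cleanly, i.e.\ arranging the high-probability events so that both $\Delta_1$ and $\Delta_2$ are simultaneously controlled while only the $1/\sqrt{n_2}$ error term survives in the final display. That bookkeeping, not any conceptual difficulty, is where the work lies.
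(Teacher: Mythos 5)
Your argument is correct in substance but follows a different route from the paper. The paper's proof is a one-line appeal to the two-sample concentration result of Theorem~3.4 in~\cite{kifer2004detecting}, which directly bounds $\Pr\left[|d_{\mathcal H\Delta\mathcal H}(\D_1,\D_2)-\widehat d_{\mathcal H\Delta\mathcal H}(\mathcal U_1,\mathcal U_2)|>\epsilon\right]$ by the sum $(2n_1)^d e^{-n_1\epsilon^2/16}+(2n_2)^d e^{-n_2\epsilon^2/16}$; the paper then sets $\epsilon=4\sqrt{(d\log(2n_2)+\log(2/\delta))/n_2}$ and uses $n_1\gg n_2$ only to argue that the \emph{first failure-probability term} is also at most $\delta/2$. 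In that route the largeness of $n_1$ never enters the magnitude of the deviation, so no absorption of an extra additive term is needed and the constant $4$ with the $\log(2n_2)$ falls out exactly from inverting $(2n_2)^d e^{-n_2\epsilon^2/16}=\delta/2$. You instead decompose $d_{\mathcal H\Delta\mathcal H}(\D_1,\D_2)\le\Delta_1+\widehat d_{\mathcal H\Delta\mathcal H}(\mathcal U_1,\mathcal U_2)+\Delta_2$ and control each $\Delta_i$ by a one-sample VC bound; this is a perfectly legitimate reconstruction (and is essentially how such two-sample bounds are proved from scratch), but it leaves you with $\Delta_1+\Delta_2$ on the right-hand side, and the final step of ``absorbing'' $\Delta_1$ into the displayed $i=2$ term is not free: $\Delta_1+\Delta_2>\Delta_2$, so you must either exploit the slack between the one-sample VC constant ($2\sqrt2$ in the standard Sauer-based form) and the stated coefficient $4$, or interpret $n_1\gg n_2$ quantitatively, and you must also union-bound the two events (turning $\log(2/\delta)$ into $\log(4/\delta)$ unless handled). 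This is exactly the bookkeeping you flag at the end, so it is a known loose end rather than a conceptual error; the paper's citation-based route simply sidesteps it. As a side remark, the paper's own proof contains a sign typo (it asserts $\epsilon\le\widetilde\epsilon$ when the inequality it actually uses, and the fact $n_1\gg n_2$, give $\widetilde\epsilon\le\epsilon$), so your independent derivation is a useful sanity check on the claimed form of the bound.
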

\begin{proof}\renewcommand{\qedsymbol}{}
    This lemma can be proved by a slight modification of Theorem 3.4 of~\cite{kifer2004detecting}. To this end, by setting $\epsilon = 4\sqrt{\frac{d\log(2n_2)+\log(2/\delta)}{n_2 }}$ and $\widetilde\epsilon = 4\sqrt{\frac{d\log(2n_1)+\log(2/\delta)}{n_1}}$, then we know $\epsilon\le\widetilde\epsilon$ (since $n_1\gg n_2$) and
    \begin{align*}
        &(2n_1)^d \exp\left(-\frac{\epsilon^2 n_1}{16}\right) \le (2n_1)^d \exp\left(-\frac{\widetilde\epsilon^2 n_1}{16}\right) = \delta/2,\\
        &(2n_2)^d \exp\left(-\frac{\epsilon^2 n_2}{16}\right) = \delta/2.
    \end{align*}
    We then obtain (\ref{lem:ineq:concentration}) by using Theorem 3.4 of~\cite{kifer2004detecting}.
\end{proof}
\begin{lemma}[Hoeffding's inequality~\cite{hoeffding1963probability}]\label{lem:hoeffding}
Let $X_1, \dots, X_n$ be i.i.d. random variables with bounded intervals: $a_i\le X_i\le b_i$, $i = 1,\dots, n$. Let $\bar X = \frac{1}{n}\sum_{i=1}^{n}X_i$, then we have
\begin{align*}
\text{Pr}\left(\left|\bar X - \E[\bar X]\right| \ge c \right) \le 2\exp\left(-\frac{2n^2c^2}{\sum_{i=1}^{n}(b_i-a_i)^2} \right).
\end{align*}
\end{lemma}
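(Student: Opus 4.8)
The plan is to prove the inequality by the classical exponential-moment (Chernoff) method. First I would reduce to centered variables: set $Y_i := X_i - \E[X_i]$, so that $\E[Y_i] = 0$ and $Y_i$ lies in an interval $[\alpha_i,\beta_i]$ of the same length, $\beta_i - \alpha_i = b_i - a_i$. Writing $S := \sum_{i=1}^{n} Y_i = n(\bar X - \E[\bar X])$, the event $\{\bar X - \E[\bar X]\ge c\}$ coincides with $\{S\ge nc\}$. For any $t>0$, Markov's inequality applied to $e^{tS}$ gives $\text{Pr}(S\ge nc)\le e^{-tnc}\,\E[e^{tS}]$, and by independence $\E[e^{tS}] = \prod_{i=1}^{n}\E[e^{tY_i}]$.

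The heart of the argument is the bound $\E[e^{tY_i}]\le \exp\!\bigl(t^2(b_i-a_i)^2/8\bigr)$ for each centered bounded $Y_i$ (Hoeffding's lemma). I would prove this from convexity of $u\mapsto e^{tu}$: for $y\in[\alpha_i,\beta_i]$ we have $e^{ty}\le \frac{\beta_i-y}{\beta_i-\alpha_i}e^{t\alpha_i} + \frac{y-\alpha_i}{\beta_i-\alpha_i}e^{t\beta_i}$; taking expectations and using $\E[Y_i]=0$ yields $\E[e^{tY_i}]\le \frac{\beta_i}{\beta_i-\alpha_i}e^{t\alpha_i} - \frac{\alpha_i}{\beta_i-\alpha_i}e^{t\beta_i}$. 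Setting $p := -\alpha_i/(\beta_i-\alpha_i)\in[0,1]$ and $u := t(\beta_i-\alpha_i)$, the right-hand side equals $e^{\psi(u)}$ with $\psi(u) := -pu + \log(1-p+pe^u)$. A direct computation gives $\psi(0)=\psi'(0)=0$ and $\psi''(u) = q(u)\bigl(1-q(u)\bigr)$ where $q(u) := \frac{pe^u}{1-p+pe^u}\in(0,1)$, hence $\psi''(u)\le \tfrac14$; by Taylor's theorem with Lagrange remainder, $\psi(u)\le u^2/8$, i.e. $\E[e^{tY_i}]\le \exp\!\bigl(t^2(\beta_i-\alpha_i)^2/8\bigr) = \exp\!\bigl(t^2(b_i-a_i)^2/8\bigr)$.

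Combining the pieces, $\text{Pr}(S\ge nc)\le \exp\!\bigl(-tnc + \tfrac{t^2}{8}\sum_{i=1}^{n}(b_i-a_i)^2\bigr)$ for every $t>0$. I then minimize the exponent over $t$: with $V := \sum_{i=1}^{n}(b_i-a_i)^2$, the optimal choice is $t = 4nc/V$, which gives $\text{Pr}(S\ge nc)\le \exp(-2n^2c^2/V)$. Running the identical argument with $-Y_i$ in place of $Y_i$ yields the matching lower-tail estimate $\text{Pr}(S\le -nc)\le \exp(-2n^2c^2/V)$, and a union bound over the two events produces $\text{Pr}\bigl(|\bar X - \E[\bar X]|\ge c\bigr)\le 2\exp\!\bigl(-2n^2c^2/\sum_{i=1}^{n}(b_i-a_i)^2\bigr)$, which is the claimed bound.

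The main obstacle is establishing Hoeffding's lemma, and within it the uniform estimate $\psi''(u)\le 1/4$: one must recognize that $\psi''(u)$ factors as $q(1-q)$ for some $q\in(0,1)$ and invoke $q(1-q)\le 1/4$. Everything else — centering, Markov's inequality, the factorization by independence, the optimization of the quadratic in $t$, and the two-sided union bound — is routine.
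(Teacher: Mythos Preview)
Your proof is correct and is exactly the classical Chernoff--Hoeffding argument: center, apply Markov to the moment generating function, bound each factor via Hoeffding's lemma using the convexity estimate $\psi''(u)=q(1-q)\le 1/4$, optimize over $t$, and union-bound the two tails. (A minor remark: the statement says ``i.i.d.'' but your argument, correctly, uses only independence and boundedness; identical distribution is never invoked, which is consistent with the $i$-dependent interval widths $b_i-a_i$.)

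There is nothing to compare against in the paper: Lemma~\ref{lem:hoeffding} is stated as a supporting lemma with a citation to Hoeffding~(1963) and is not proved there. Your write-up supplies precisely the standard proof that the paper omits.
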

\end{document}